\newtheorem{thm}{Theorem}
\newtheorem{lemma}{Lemma}
\newtheorem{definition}{Definition}
\DeclareMathOperator*{\argmin}{argmin}
\DeclareMathOperator*{\sgn}{sign}
\DeclareMathOperator*{\st}{s.t.}
\def \R {\mathbb{R}}
\def \x {\mathbf{x}}
\def \E {\mathrm{E}}
\def \B {\mathcal{B}}
\def \u {\mathbf{u}}
\def \y {\mathbf{y}}
\def \w {\mathbf{w}}
\def \v {\mathbf{v}}
\def \D {\mathcal{D}}
\def \H {\mathcal{H}}
\def \P {\mathcal{P}}
\def \dis {\mbox{DIS}}
\def \X {\mathcal{X}}
\def \C {\mathcal{C}}
\def \ep {\ell_{\phi}}
\def \ebin {\ell_b}
\DeclareSymbolFont{bbold}{U}{bbold}{m}{n}
\DeclareSymbolFontAlphabet{\mathbbold}{bbold}
\def \ind {\mathbbold{1}}
\title{Beating the Minimax Rate of Active Learning with Prior Knowledge}
\author{Lijun Zhang \qquad Mehrdad Mahdavi \qquad Rong Jin  \\
Department of Computer Science and Engineering \\ Michigan State University \\
\texttt{\{zhanglij,mahdavim,rongjin\}@msu.edu}}
\date{}
\begin{document}

\maketitle
\begin{abstract}
Active learning refers to the learning protocol where the learner is allowed to choose a subset of instances for labeling. Previous studies have shown that, compared with passive learning, active learning is able to reduce the label complexity exponentially if the data are linearly separable or satisfy the Tsybakov noise condition with parameter $\kappa=1$. In this paper, we propose a novel active learning algorithm using a convex surrogate loss, with the goal to broaden the cases for which active learning achieves an exponential improvement. We make use of a convex loss not only because it reduces the computational cost, but more importantly because it leads to a tight bound for the empirical process (i.e., the difference between the empirical estimation and the expectation) when the current solution is close to the optimal one. Under the assumption that the norm of the optimal classifier that minimizes the convex risk is available, our analysis shows that the introduction of the convex surrogate loss yields an exponential reduction in the label complexity even when the parameter $\kappa$ of the Tsybakov noise is larger than $1$. To the best of our knowledge, this is the first work that improves the minimax rate of active learning by utilizing certain priori knowledge.
\end{abstract}

\section{Introduction}
The goal of active learning is to reduce the number of training examples required for a learner to achieve good generalization performance~\cite{Cohn:2011:active}. In~\cite{Castro:2007:MBA}, the authors show that the minimax convergence rate for any active learning algorithm is bounded by $n^{-\frac{\kappa}{2\kappa - 2}}$, where $n$ is the number of labeled instances and $\kappa \geq 1$ is used in Tsybakov noise condition~\cite{Tsybakov04optimalaggregation} to characterize the behavior of $\Pr(Y = 1|X = \x)$ in the neighborhood of the decision boundary.\footnote{We omit an additional parameter $\rho$ in the minimax rate, that is because $\rho=0$ under the assumption that the optimal classifier is a linear function.} This result implies that unless $\kappa = 1$, no active learning algorithm is able to achieve an exponential reduction in the label complexity for general data distributions. In this work, we develop theory and algorithm for active learning that aim to override this impossibility. We show that if $R = \|\w_*\|$ is known apriori, where $\w_*$ minimizes both the binary  risk and the convex risk, it is possible to achieve an exponential reduction in label complexity for certain family of distributions even when $\kappa > 1$. We emphasize that our result does not contradict with the minimax rate of active learning proved in~\cite{Castro:2007:MBA} because we assume the prior knowledge of $R = \|\w_*\|$ is available to the learner.

Our work is closely related to the previous study of surrogate losses in passive and active learning~\cite{Hanneke:2012:Surrogate}, in which the authors showed that under appropriate conditions, the exponential reduction in label complexity can still be achieved when the binary loss function is replaced with a convex loss in order to improve the computational efficiency in active learning. In this study, we make one step further. We show that besides the computational efficiency, the introduction of convex loss function could also benefit the convergence rates of active learning if the length of the optimal linear classifier for the convex risk is known a priori. The key idea is to explore the Lipschitz smoothness of the convex loss function and the technique of local Rademacher complexity~\cite{Local_RC,Local:Vladimir}, i.e., the concentration bound gets tighter as the solution is approaching the optimal one. It is the improved concentration bound, due to the use of a convex loss function, that leads to the surprising result that under the favored conditions, including the assumption that $\|\w_*\|$ is known, it is possible to achieve an exponential reduction in label complexity even when $\kappa > 1$.

\section{Related Work}
According to~\cite{gonen:2012:efficient}, most active learning algorithms can be classified into two categories: greedy algorithms and mellow algorithms. The greedy active learning algorithms are designed to select the most informative instances for labeling that will result in an approximately even split of the hypothesis space. Instead of trying to find the most informative instances for labeling, the mellow algorithms for active learning, sometimes referred to as selective sampling, solicit labels for instances as long as they satisfy a given criterion which is usually adjusted over iterations.  In this work, we will focus on selective sampling for active learning.

Many studies show that active learning can result in exponential reduction of label complexity compared to passive learning when data are linearly separable~\cite{Freund:1997:SSU,Dasgupta:2005:Coarse,Dasgupta:2009:APA}. More recent studies focus on agnostic active learning where data cannot be perfectly classified by a linear classifier~\cite{Balcan:2006:AAL,Balcan:2007:margin,balcan:2010:true,beygelzimer:2010,Castro:2007:MBA,Hanneke:2010:Negative,Hanneke:2011:Rates,Hanneke:2012:Surrogate,Koltchinskii:2010:RCB}.
In~\cite{Awasthi:2013,balcan:2013:active}, the authors show that active learning can achieve exponential reduction in label complexity for the malicious noise model, the adversarial nose model, and the Tsybakov noise model with parameter $\kappa=1$. The minimax rates for active learning under the Tsybakov noise model is discussed in~\cite{Castro:2007:MBA}, which shows it is in general impossible to reduce the sample complexity exponentially when the parameter $\kappa >1$. In this study, we try to override the minimax rate for the Tsybakov noise condition by considering the scenario when $R = \|\w_*\|$ is know apriori, where $\w_*$ minimizes both the binary risk and the convex risk.

Various algorithms have been developed for active learning. Several active learning algorithms~\cite{Balcan:2006:AAL,Hanneke:2011:Rates} require maintaining the subset of hypotheses that yield small binary excess risk, which may result in a high computational cost. This limitation was addressed by either making specific assumptions about data distribution~\cite{Balcan:2007:margin,balcan:2013:active} or by maintaining only two hypotheses and soliciting the class label for an instance $\x$ when the two hypotheses differ significantly~\cite{dasgupta:2007:general,beygelzimer:2010}. Several studies extend active learning theory to the online learning setup~\cite{Dasgupta:2009:APA,Cavallanti:2011:LNL,dekel:2012:selective}.

An important quantity in the analysis of active learning is disagreement coefficient~\cite{Hanneke:2007:Bound,Hanneke:2011:Rates}. It was shown in~\cite{Koltchinskii:2010:RCB} that the disagreement coefficient is closely related to the capacity function~\cite{alexander:1987:rates}, and under the Massart low noise condition, the capacity function can be bounded by a constant. It was further shown in~\cite{Friedman:2009:Active}, the disagreement coefficient can be bounded by a constant for any smooth function. In~\cite{raginsky:2011:lower}, the capacity function was used to provide lower bounds for both passive and active learning.

The convex surrogate loss has been well-studied in the passive learning to ease the computational problems~\cite{zhang2004,ICML2012Ben-David_917}. In particular, our analysis is heavily built upon the theories for converting convex excess risk to binary excess risk developed in~\cite{bartlett-2006-convexity}. The study of convex surrogate loss in active learning is still in its early stages, and only limited results are available. In~\cite{Hanneke:2012:Surrogate}, the authors present an active learning algorithm based on a surrogate loss. Following this line of research, in this paper, we show that the introduction of convex surrogate loss not only reduces the computational cost of active learning, but also improves the sample complexity if some prior knowledge is known. We note that our conclusion does not conflict with the negative results in~\cite{Hanneke:2010:Negative} (i.e. the basic approach of optimizing the surrogate risk via active learning to a sufficient extent to guarantee small error rate generally does not lead to as strong of results.) , because we make some additional assumptions, in particular the assumption that $\|\w_*\|$ is available to the learning process.

\section{Assumptions}
We first introduce the notations used in this paper, and discussion the assumptions we made.
\subsection{Notations}
Let $\X \subseteq \R^d$ be the domain for the input patterns for classification, and $\mathcal{Y} = \{-1, +1\}$ be the binary class assignment. Let $\P_{XY}$ be the joint distribution for input pattern $X \in \X$ and output binary class assignment $Y \in \mathcal{Y}$, and $\P_X$ be the  marginal distribution for $\X$. Let $\phi(z)$ be a convex loss function that $L$-Lipschitz continuous.  For a prediction function $f(\x): \R^d \rightarrow \R$, we define the convex risk $\ep(\cdot)$ and binary risk $\ebin(\cdot)$ as
\[
\begin{split}
    \ep(f)   & = \E_{(\x, y)\sim \P_{XY}}[\phi(y f(\x)], \\
    \ebin(f) & = \E_{(\x, y)\sim \P_{XY}}[\ind(y f(\x) \leq 0)],
\end{split}
\]
where $\ind(z)$ is an indicator function that outputs $1$ when the predicate $z$ is true and zero otherwise.

In this study, we focus on linear classifier like most studies of active learning.  To keep the notation simple, we refer to a weight vector and the linear classifier with that weight vector interchangeably. Let $\w_*$ be the optimal linear classifier that minimizes the convex risk $\ep(\cdot)$, i.e.,
\[
    \w_* = \argmin\limits_{\w \in \R^d} \ep(\w).
\]
Throughout the paper, we will use $\bar{\w}$ to represent the unit vector that is along the direction of $\w$, and use $\|\w\|$ to represent the $\ell_2$-norm of vector $\w$.

\subsection{Assumptions}
In this following, we discuss the assumptions made about the optimal solution $\w_*$ and the binary risk $\ebin(\w)$ that are crucial to our analysis. They are:
\begin{itemize}
\item {\bf Assumption (I)}:  $R=\|\w_*\|$ is bounded and known apriori to the learner.
\item {\bf Assumption (II)}: $\w_*$ minimizes both the convex risk $\ep(\cdot)$ and the binary risk $\ebin(\cdot)$ over all the measurable functions.
\item {\bf Assumption (III)}: There exists constants $\ell_-, \ell_+ >0$ and $\gamma_-  \geq \gamma_+ >0$ such that for any $\w \in \R^d$
\[
    \ell_-\|\bar{\w} - \bar{\w}_*\|^{\gamma_-} \leq \ebin(\w) - \ebin(\w_*) \leq \ell_+\|\bar{\w} - \bar{\w}_*\|^{\gamma_+}
\]
\end{itemize}
To quantify the noise level, we choose the Tsybakov low noise condition~\cite{Tsybakov04optimalaggregation}, which combined with Assumption (II) leads to the following condition.
\[
\Pr_{X \sim \P_X} \left\{\sgn(\w^{\top}X) \neq \sgn(\w_*^{\top}X) \right\} \leq \mu (\ebin(\w) - \ebin(\w_*))^{1/\kappa}
\]
for some constants $\mu>0$ and $\kappa \geq 1$.

Assumptions (I) assumes the prior knowledge of $R = \|\w_*\|$. The knowledge of $R$ may be obtained based on the assumption of the application domain. For instance, in the case of transfer learning~\cite{pan-2010-survey}, when instances in the target domain are subjected to a unknown unitary transformation of instances from a source domain, the length of the linear classifier will be preserved and transferred from the source domain to the target domain, leading to the knowledge of $R$ for the target domain. This type of transfer learning problem appears in computer vision~\cite{graepel-2003-invariant}, where images of one domain are acquired by applying certain invariant transform to the images from another domain.

The combination of Assumptions (I) and (II) allows us to nicely connect the minimization of the binary loss with the minimization of a convex loss. It is this connection that makes it possible to improve the convergence rate of active learning. Assumption (III) is a key technical assumption to our analysis. It is introduced to ensure that as the binary risk is reduced, the estimated solution is getting closer to the optimal solution, which makes it possible to explore the local Rademacher complexity of minimizing the convex loss for faster convergence rate of active learning. Below we will justify Assumptions (II) and (III).

\subsection{Justification for Assumption (II)} \label{eqn:Ass:II}
Define
\[
\eta(\x) = \Pr(Y=1|X=\x).
\]
The optimal prediction function $\tau(\cdot)$ that minimizes $\ep(\cdot)$ over all measurable function is given by
%\begin{equation} \label{eqn:tau}
\[
\tau(\x) = \argmin\limits_{z \in \R} \eta(\x) \phi(z) + (1 - \eta(\x)) \phi(-z).
\]
The first part of Assumption (II) (i.e. $\w_*$ minimizes $\ell_{\phi}(\cdot)$) assumes that $\tau(\cdot)$ is a linear function, which is also used in the recent study of the convex surrogate loss~\cite{Hanneke:2012:Surrogate}. This assumption allows us to bound the binary excess risk in terms of convex excess risk~\cite{bartlett-2006-convexity}.

\paragraph{Remark} There are some special cases of $\eta(\cdot)$ and $\phi(\cdot)$ such that $\tau(\cdot)$ is certainly a linear function~\cite{bartlett-2006-convexity}.
\begin{itemize}
\item Exponential loss $\phi(\alpha) = e^{-\alpha}$, and a logistic model $\eta(x)=1/[1+\exp(-\w^\top \x)]$. We have
\[
\tau(\x)=\frac{1}{2} \log\left(\frac{\eta(\x)}{1-\eta(\x)} \right) = \frac{1}{2} \w^\top \x.
\]
\item Truncated quadratic loss $\phi(\alpha) = [\max(0, 1 - \alpha)]^2$, and an affine model $\eta(x)=\w^\top \x+1/2$. We have
\[
\tau(\x)=2\eta(\x)-1= \w^\top \x.
\]
\end{itemize}

The second part of this assumption (i.e. $\w_*$ also minimizes the binary loss $\ell_b(\cdot)$) is a direct consequence of the first one if the convex loss is \emph{classification-calibrated}~\cite{bartlett-2006-convexity}.
\begin{definition}
A convex loss $\phi(\cdot)$ is classification-calibrated if, for any $\eta \neq 1/2$,
\[
H^-(\eta) > H(\eta),
\]
where
\[
H^-(\eta)=\inf_{\alpha:\alpha(2\eta-1)\leq 0}\left( \eta \phi(\alpha)+(1-\eta) \phi(-\alpha)\right), \textrm{ and } H(\eta)= \inf_{\alpha\in\R} \left( \eta \phi(\alpha)+(1-\eta) \phi(-\alpha)\right).
\]
\end{definition}

%The definition of classification-calibrated ensures $\tau(\x)$ has the same sign as $ 2\eta(x)-1$ when $\eta(x) \neq 1/2$. Furthermore, it is well-known that any function $f(\cdot)$ satisfying
%\[
%\sgn( f(x)) = \sgn(2\eta(x)-1),
%\]
%minimizes the binary risk $\ebin(\cdot)$ over all measurable functions.

We finally note that $\tau(\x)$ only depends on conditional distribution $\eta(\x)$ and the convex loss $\phi(\cdot)$, and is independent from the marginal distribution $\P_X$. Thus, Assumption (II) actually holds for any distribution of $X$, as long as $\eta(\x)$ and $\phi(\cdot)$ remain the same.

\subsection{Justification for Assumption (III)}
\label{sec:justificaiton-III}
This assumption will be used in our analysis to bound the distance between $\bar{\w}$ and $\bar{\w}_*$ using the difference in their binary risk. We first examine the lower bound in Assumption (III), i.e., $\ebin(\w) - \ebin(\w_*) \geq \ell_- \|\bar{\w} - \bar{\w}_*\|^{\gamma_-}$, and then discuss the upper bound $\ebin(\w) - \ebin(\w_*) \leq \ell_+ \|\bar{\w} - \bar{\w}_*\|^{\gamma_+}$.

\paragraph{Lower bound in Assumption (III)} The following lemma bounds the values for $\ell_-$ and $\gamma_-$ when $X$ follows an isotropic log-concave distribution.
\begin{lemma} \label{lem:lower}
Assume $\P_X$ is an isotropic log-concave in $\R^d$ and assume that the Tsybakov condition holds with constants $\mu>0$ and $\kappa \geq 1$. We have $\gamma_- = \kappa$ and $\ell_- \geq \frac{c^\kappa}{\mu^\kappa}$ for Assumption (III), where $c$ is an universal constant defined in~\cite[lemma 3]{balcan:2013:active}.
\end{lemma}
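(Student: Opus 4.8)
The plan is to chain three inequalities: a lower bound on the disagreement probability that is specific to isotropic log-concave distributions, an elementary geometric comparison between the angular and the Euclidean distance of the direction vectors, and the Tsybakov noise condition. The key observation that makes this work is that $\sgn(\w^\top X) = \sgn(\bar{\w}^\top X)$, so the disagreement region $\{\sgn(\w^\top X) \neq \sgn(\w_*^\top X)\}$ depends on $\w$ and $\w_*$ only through their directions $\bar{\w}$ and $\bar{\w}_*$. It is therefore natural to phrase everything in terms of the angle $\theta = \theta(\bar{\w}, \bar{\w}_*) \in [0,\pi]$ between them.

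First I would invoke the cited result \cite[lemma 3]{balcan:2013:active}: for an isotropic log-concave $\P_X$ on $\R^d$, the disagreement probability between two linear separators is bounded below by a universal constant times the angle, i.e.
\[
\Pr_{X \sim \P_X}\left\{\sgn(\w^\top X) \neq \sgn(\w_*^\top X)\right\} \geq c\,\theta(\bar{\w}, \bar{\w}_*).
\]
This is the distribution-specific ingredient, and it is precisely where the log-concavity hypothesis is consumed: it guarantees that the marginal does not place vanishingly small mass in the wedge between the two halfspaces, so the disagreement cannot collapse faster than linearly in the angle.

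Next I would carry out the geometric step relating $\theta$ to $\|\bar{\w} - \bar{\w}_*\|$. Since $\bar{\w}$ and $\bar{\w}_*$ are unit vectors, $\|\bar{\w} - \bar{\w}_*\| = 2\sin(\theta/2)$, and using $\sin(x) \leq x$ on $[0,\pi/2]$ yields $\|\bar{\w} - \bar{\w}_*\| \leq \theta$. Substituting this into the disagreement lower bound gives
\[
\Pr_{X \sim \P_X}\left\{\sgn(\w^\top X) \neq \sgn(\w_*^\top X)\right\} \geq c\,\|\bar{\w} - \bar{\w}_*\|.
\]
Finally I would combine this with the Tsybakov condition $\Pr\{\cdot\} \leq \mu(\ebin(\w) - \ebin(\w_*))^{1/\kappa}$, obtaining $c\,\|\bar{\w} - \bar{\w}_*\| \leq \mu(\ebin(\w) - \ebin(\w_*))^{1/\kappa}$. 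Raising both sides to the $\kappa$-th power and isolating the excess risk produces $\ebin(\w) - \ebin(\w_*) \geq (c/\mu)^\kappa \|\bar{\w} - \bar{\w}_*\|^\kappa$, which is exactly the lower bound in Assumption (III) with $\gamma_- = \kappa$ and $\ell_- = c^\kappa/\mu^\kappa$, establishing the claimed $\ell_- \geq c^\kappa/\mu^\kappa$.

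I expect the main subtlety to lie in the validity of the disagreement lower bound across the entire range of angles. For $\theta$ near $\pi$ the expression $c\theta$ must remain below $1$, which forces one to rely on $c$ being a sufficiently small universal constant as supplied by the cited lemma; should that estimate only be stated for $\theta$ up to some threshold, one must separately verify that the excess-risk inequality still holds for large angles, where $\|\bar{\w} - \bar{\w}_*\|$ is bounded by $2$ and $\ebin(\w) - \ebin(\w_*)$ stays bounded away from zero. The remaining manipulations are routine.
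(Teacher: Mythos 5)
Your proposal is correct and follows essentially the same route as the paper's own proof: the cited lower bound $c\,\theta(\bar{\w},\bar{\w}_*) \leq \Pr\{\sgn(\w^\top X) \neq \sgn(\w_*^\top X)\}$ from \cite[Lemma 3]{balcan:2013:active}, the identity $\|\bar{\w}-\bar{\w}_*\| = 2\sin(\theta/2) \leq \theta$, and the Tsybakov condition, chained and raised to the $\kappa$-th power. Your closing remarks about scale-invariance of the sign function and the large-angle regime are harmless additions that do not alter the argument.
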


\begin{proof}
We need the following lemma regarding the isotropic log-concave distribution~\cite{balcan:2013:active}.
\begin{lemma}(\cite[Lemma 3]{balcan:2013:active})
Assume $\P_X$ is an isotropic log-concave in $\R^d$. Then, there exists constant $c > 0$ such that for any two unit vectors $\u$ and $\v$ we have
\[
    c\theta(\u, \v) \leq \Pr_{X \sim \P_X} \left\{\sgn(\u^{\top}X) \neq \sgn(\v^{\top}X) \right\},
\]
where $\theta(\u, \v)$ is the angle between $\u$ and $\v$.
\end{lemma}
Using the fact $\sin(x) \leq x$ and the above lemma, we have
\begin{eqnarray*}
\lefteqn{\left\|\bar{\w} - \bar{\w}_*\right\| = \sqrt{2- 2 \cos\left(\theta(\w, \w_*) \right) } = 2\sin\left(\frac{1}{2}\theta(\w, \w_*)\right)}\\
& & \leq \theta(\w, \w_*) \leq \frac{1}{c}\Pr_{X \sim \P_X} \left\{\sgn(\w^{\top}X) \neq \sgn(\w_*^{\top}X) \right\}.
\end{eqnarray*}
From the Tsybakov low noise condition, we have
\[
\left\|\bar{\w} - \bar{\w}_*\right\| \leq \frac{\mu}{c}(\ebin(\w) - \ebin(\w_*))^{1/\kappa}.
\]
\end{proof}

\paragraph{Upper bound in Assumption (III)} The following lemma justifies the upper bound when $\P_X$ is orthogonally invariant.\footnote{In the literature, orthogonally invariant is also refereed to as isotropic~\cite{Isotropic}, which is different from the definition of isotropic in~\cite{balcan:2013:active}.}
\begin{lemma} \label{lem:upper:new}
Suppose $\P_X$ is orthogonally invariant. We have $\gamma_+= 1$ and $\ell_+ = 1/2$ for Assumption (III).
\end{lemma}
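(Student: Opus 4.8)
The plan is to first upper bound the binary excess risk $\ebin(\w)-\ebin(\w_*)$ by the probability that the two linear classifiers $\w$ and $\w_*$ disagree in sign, and then to evaluate that disagreement probability exactly using the rotational symmetry of $\P_X$. For the first step I would use the elementary fact that the binary risk changes by at most the disagreement probability between two classifiers: the error indicators of $\w$ and $\w_*$ can differ only on the set $\{\sgn(\w^\top X) \neq \sgn(\w_*^\top X)\}$, so
\[
\ebin(\w) - \ebin(\w_*) \leq \Pr_{X \sim \P_X}\!\left\{\sgn(\w^\top X) \neq \sgn(\w_*^\top X)\right\},
\]
where the left-hand side is nonnegative because $\w_*$ minimizes $\ebin(\cdot)$ by Assumption (II). Equivalently, one may invoke the sharper identity $\ebin(\w)-\ebin(\w_*) = \E_X[|2\eta(X)-1|\,\ind(\sgn(\w^\top X)\neq\sgn(\w_*^\top X))]$ and bound $|2\eta(X)-1|\le 1$.

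Next I would compute the disagreement probability. Both $\w$ and $\w_*$ lie in the two-dimensional subspace $V = \mathrm{span}(\w,\w_*)$, so the sign-disagreement event depends on $X$ only through its orthogonal projection onto $V$. Because $\P_X$ is orthogonally invariant, the law of this projection is rotationally invariant on $V \cong \R^2$, and hence its angular coordinate is uniform. The two decision boundaries are lines through the origin meeting at angle $\theta(\w,\w_*)$, and the disagreement set is the union of the two opposite sectors of angular width $\theta(\w,\w_*)$, so
\[
\Pr_{X \sim \P_X}\!\left\{\sgn(\w^\top X) \neq \sgn(\w_*^\top X)\right\} = \frac{\theta(\w,\w_*)}{\pi}.
\]

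Finally I would convert the angle into the Euclidean distance between the unit vectors. As already recorded in the proof of Lemma~\ref{lem:lower}, $\|\bar{\w}-\bar{\w}_*\| = 2\sin(\tfrac12\theta(\w,\w_*))$. Applying Jordan's inequality $\sin(x) \geq \tfrac{2}{\pi}x$ on $[0,\pi/2]$ with $x=\tfrac12\theta(\w,\w_*)\in[0,\pi/2]$ gives $\|\bar{\w}-\bar{\w}_*\| \geq \tfrac{2}{\pi}\theta(\w,\w_*)$, i.e. $\theta(\w,\w_*)/\pi \leq \tfrac12\|\bar{\w}-\bar{\w}_*\|$. Chaining the three displays yields $\ebin(\w)-\ebin(\w_*) \leq \tfrac12\|\bar{\w}-\bar{\w}_*\|$, which is precisely Assumption (III) with $\gamma_+=1$ and $\ell_+=1/2$.

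The step I expect to demand the most care is the reduction to two dimensions: namely, arguing that orthogonal invariance of $\P_X$ on $\R^d$ descends to rotational invariance of the marginal on $V$ (which follows by restricting to rotations that act within $V$ and fix its orthogonal complement), and handling the measure-theoretic corner case of a possible atom of $\P_X$ at the origin, where both signs vanish and no disagreement occurs; such an atom only decreases the disagreement probability and therefore preserves the upper bound. The remaining steps are elementary trigonometry.
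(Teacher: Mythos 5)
Your proof is correct and follows essentially the same route as the paper: bound the binary excess risk by the sign-disagreement probability, show that probability equals $\theta(\w,\w_*)/\pi$ under orthogonal invariance, and finish with Jordan's inequality applied to $\|\bar{\w}-\bar{\w}_*\| = 2\sin\bigl(\tfrac{1}{2}\theta(\w,\w_*)\bigr)$. The only difference is that you derive the $\theta/\pi$ identity directly via projection onto $\mathrm{span}(\w,\w_*)$ (also handling a possible atom at the origin), whereas the paper normalizes $X$ to the unit sphere and cites Lemma~3.2 of Goemans--Williams­on for the same fact.
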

\begin{proof}
First, we have
\[
\begin{split}
\ebin(\w) - \ebin(\w_*) \leq \Pr_{X \sim \P_X} \left\{\sgn(\w^\top X) \neq \sgn(\w_*^\top X) \right\}.
\end{split}
\]
Since the sign function is invariant respect to scaling, we have
\[
\ebin(\w) - \ebin(\w_*)  \leq \Pr_{X \sim \P_X} \left\{\sgn(\w^\top X/\|X\|_2) \neq \sgn(\w_*^\top X/\|X\|_2) \right\}.
\]
Define $Y=X/\|X\|_2$. Because the distribution of $X$ is orthogonally invariant, it is well-known that $Y$ follows the uniform distribution on the $n$-dimensional sphere, which is denoted by $\sigma_n$~\cite{Isotropic}. Following~\cite[Lemma 3.2]{Goemans:1995:IAA}, we have
\begin{equation} \label{eqn:bound:iso}
\ebin(\w) - \ebin(\w_*)  \leq \Pr_{Y \sim \sigma_n} \left\{\sgn(\w^\top Y) \neq \sgn(\w_*^\top Y) \right\}= \frac{1}{\pi} \theta(\w, \w_*).
\end{equation}
From Jordan's inequality, we know $\sin(x) \geq 2 x/\pi$ for $x \in [0,\pi/2]$. Thus, we have
\[
\left\|\bar{\w} - \bar{\w}_*\right\| =  2\sin\left(\frac{1}{2}\theta(\w, \w_*)\right)  \geq  \frac{2}{\pi}  \theta(\w, \w_*) \overset{\text{(\ref{eqn:bound:iso})}}{\geq} 2 \ebin(\w) - \ebin(\w_*).
\]
\end{proof}
A more general discussion for the upper bound in Assumption (III) can be found in the appendix.

\section{Algorithm}
% We start with $\w_1$ as a random unit vector, and the hypothesis space $\Omega_1 = \{\w \in \R^d: |\w| \leq 1, |\w - \w_1| \leq r_1 = 2\}$.
Our algorithm works as follows. We divide the learning into $m$ epoches. At the $k$th epoch, we have a hypothesis space $\Omega_k = \{\w \in \R^d: \|\w\|=1, \|\w - \w_k\| \leq r_k=2^{-k+2}\}$, where $\w_k$ is an unit vector that is computed from the previous epoch, and $r_k$ specifies the size of the domain. We sequentially scan through the pool of training examples, and request the class label for a training instance only when it interacts with the domain $\Omega_k$. More specifically, we will request the class label for instance $\x$ if
\begin{equation} \label{eqn:selection-criterion}
\sgn(\w^{\top}\x) \neq \sgn(\w^{\top}_k \x), \ \exists  \w \in \Omega_k.
\end{equation}
The following theorem simplifies this condition significantly.
\begin{lemma} \label{lem:selection}
The condition in (\ref{eqn:selection-criterion}) is equivalent to
\begin{subnumcases}{|\bar{\x}^{\top}\w_k| \leq}
r_k \sqrt{1-r^2_k/4}, & if $r_k \leq 1$; \label{eqn:selection1} \\
1,  & if $r_k=2$. \label{eqn:selection2}
\end{subnumcases}
Here, $\bar{\x}$ is the unit vector along the direction of $\x$.
\end{lemma}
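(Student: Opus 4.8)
The plan is to translate the logical ``$\exists\, \w \in \Omega_k$'' condition into a single scalar inequality by solving a small constrained optimization over the spherical cap $\Omega_k$. First I would exploit the scale-invariance of $\sgn$ to replace $\x$ by its normalization $\bar{\x}$, so that only the unit vector $\bar{\x}$ matters. Next, since every $\w \in \Omega_k$ is a unit vector, I would rewrite the domain constraint $\|\w - \w_k\| \le r_k$ as $\w^\top \w_k \ge 1 - r_k^2/2$, which exhibits $\Omega_k$ as a spherical cap of angular radius $\alpha$, where $\cos\alpha = 1 - r_k^2/2$, equivalently $r_k = 2\sin(\alpha/2)$. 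After flipping $\bar{\x} \to -\bar{\x}$ if necessary, I may assume $\w_k^\top \bar{\x} \ge 0$; then the existence of some $\w \in \Omega_k$ with $\sgn(\w^\top \bar{\x}) \ne \sgn(\w_k^\top \bar{\x})$ is exactly the statement that $\min_{\w \in \Omega_k} \w^\top \bar{\x} \le 0$.

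The core computation is this minimization. Because both the objective $\w^\top \bar{\x}$ and the constraint $\w^\top \w_k \ge \cos\alpha$ depend on $\w$ only through its orthogonal projection onto $\mathrm{span}(\w_k, \bar{\x})$, I would argue that the minimizer can be taken inside this two-dimensional plane, reducing the problem to minimizing a linear functional over the intersection of a unit disk with a half-plane. Introducing coordinates with $\w_k = (1,0)$ and $\bar{\x} = (\cos\beta, \sin\beta)$, where $\beta = \theta(\w_k, \bar{\x}) \in [0, \pi/2]$, an elementary boundary analysis (the optimum sits at the corner where the bounding chord meets the cap's arc) yields $\min_{\w \in \Omega_k} \w^\top \bar{\x} = \cos(\alpha + \beta)$. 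Consequently a sign disagreement is available if and only if $\cos(\alpha + \beta) \le 0$, i.e.\ $\alpha + \beta \ge \pi/2$.

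It remains to convert $\alpha + \beta \ge \pi/2$ into the stated threshold on $|\bar{\x}^\top \w_k| = \cos\beta$, and here the two regimes $r_k \le 1$ and $r_k = 2$ must be separated. For $r_k \le 1$ we have $\sin(\alpha/2) = r_k/2 \le 1/2$, hence $\alpha \le \pi/3$ and in particular $\pi/2 - \alpha \ge 0$; since cosine is decreasing on $[0,\pi]$, $\alpha + \beta \ge \pi/2 \Leftrightarrow \cos\beta \le \cos(\pi/2 - \alpha) = \sin\alpha$. A half-angle identity then gives $\sin\alpha = 2\sin(\alpha/2)\cos(\alpha/2) = r_k \sqrt{1 - r_k^2/4}$, which is exactly (\ref{eqn:selection1}). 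For $r_k = 2$ we instead get $\cos\alpha = -1$, so $\alpha = \pi$ and the cap is the entire unit sphere; then a disagreeing $\w$ always exists and the criterion degenerates to the trivially-true $|\bar{\x}^\top \w_k| \le 1$, matching (\ref{eqn:selection2}).

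The main obstacle I anticipate is the constrained minimization in the second step: justifying the reduction to the plane $\mathrm{span}(\w_k, \bar{\x})$ and carefully locating the optimum of the linear functional over the disk-intersect-half-plane region (checking that it lies at the arc/chord corner rather than in the interior of either boundary piece). A secondary point of care is that the clean formula $\sin\alpha = r_k\sqrt{1 - r_k^2/4}$ is valid only because $\pi/2 - \alpha \ge 0$; this is precisely why the intermediate range $1 < r_k < 2$ is excluded, and indeed the schedule $r_k = 2^{-k+2}$ takes only the values $2$ and those $\le 1$, so no third case arises.
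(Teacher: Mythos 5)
Your proof is correct and takes essentially the same route as the paper's: translating the constraint $\|\w-\w_k\|\leq r_k$ on unit vectors into the angular cap condition $\theta(\w,\w_k)\leq 2\arcsin(r_k/2)$, showing a disagreeing classifier exists iff $|\cos\theta(\w_k,\x)|\leq \sin\bigl(2\arcsin(r_k/2)\bigr)$, applying the double-angle identity to get $r_k\sqrt{1-r_k^2/4}$, and noting both conditions are vacuous when $r_k=2$. The only difference is that you rigorously supply, via the two-dimensional constrained minimization giving $\min_{\w\in\Omega_k}\w^{\top}\bar{\x}=\cos(\alpha+\beta)$, the step the paper dismisses as ``a simple geometry argument.''
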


We denote by $\D_k = \{(\x_k^t, y_k^t), t=1, \ldots, n_k\}$ the collection of labeled training examples received at epoch $k$, where $n_k$ is the number of labeled instances at epoch $k$. Using the training examples in $\D_k$, we compute a new classifier $\w_{k+1}$ as the solution to
\begin{equation} \label{eqn:update}
\min\limits_{\w \in \Omega_k} \sum_{t=1}^{n_k} \ind \left(y_k^t \neq \sgn(\w^{\top}\x_k^t)\right).
\end{equation}
The new hypothesis space, denoted by $\Omega_{k+1}$, is then updated as
\[
\Omega_{k+1} = \left\{\w \in \R^d: \|\w\| = 1, \|\w - \w_{k+1}\| \leq r_{k+1} = r_k/2 \right\},
\]
where the size of the hypothesis space is reduced by half through each epoch. We note that the idea of reducing the hypothesis space by half for each epoch has been used in the margin-based active learning algorithm~\cite{Balcan:2007:margin,balcan:2013:active}. The main difference between this work and the previous ones is that we update the solution by minimizing a convex surrogate loss, a key component that allows us to improve the minimax rate under favored conditions.

One problem with the updating procedure given in (\ref{eqn:update}) is that it requires solving a non-convex optimization problem, which could be computationally expensive when the number of training examples is large. To address this limitation, we propose to obtain $\tilde{\w}_{k+1}$ by solving the following convex optimization problem
\begin{eqnarray} \label{eqn:convex-update}
\min \limits_{\w \in \Delta_k} \sum_{t=1}^{n_k} \phi(y_k^t\w^{\top}\x_k^t),
\end{eqnarray}
where $\Delta_k = \left\{\w \in \R^d: \|\w - R\w_k\| \leq R r_k\right\}$. Here $R = \|\w_*\|$ comes from the prior knowledge of $\w_*$ due to Assumption (I). The final $\w_{k+1}$ is obtained by normalizing $\tilde{\w}_{k+1}$ to a unit vector. Algorithm~\ref{alg:1} summarizes the key steps of both approaches.

\begin{algorithm}[t]
\caption{Active Learning with Faster Convergence Rate}

\begin{algorithmic}[1]
\STATE Set $\w_1$ as a random unit vector and $r_1 = 2$
\FOR{$k=1, 2, \ldots, $m}
    \STATE Label $n_k$ training instances that satisfy (\ref{eqn:selection1}) or (\ref{eqn:selection2}).
    \STATE Learn a new classifier $\tilde{\w}_{k+1}$ solving either the non-convex optimization problem in (\ref{eqn:update}) or the convex optimization in (\ref{eqn:convex-update})
    \STATE Set $\w_{k+1} = \tilde{\w}_{k+1}/\|\tilde{\w}_{k+1}\|$ and $r_{k+1} = r_k/2$.
\ENDFOR
\end{algorithmic} \label{alg:1}
{\bf Return} $\x_f$
\end{algorithm}

\section{Analysis}
We will first introduce the basic concepts that are commonly used in the analysis of active learning. We will then analyze the label complexity for solving the non-convex optimization problem in (\ref{eqn:update}). The key result of this work is presented in Section~\ref{sec:convex-analysis}, where we show that the exponential reduction can be achieved even when $\kappa > 1$ if we solve the convex optimization problem in (\ref{eqn:convex-update}) in Algorithm~\ref{alg:1}. Due to space limitations, most of the technical proofs are provided in the Appendix.

\subsection{Basics}
Similar to most active learning theories, we assume bounded disagreement coefficient~\cite{Hanneke:2011:Rates}. We define the region of disagreement as, for any subset of hypothesis $V$,
\[
\dis(V) = \left\{\x \in \X: \exists \w_1, \w_2 \in V \st \sgn(\w_1^{\top}\x) \neq \sgn(\w_2^{\top}\x)\right\}.
\]
For $r \in [0, 1]$, let
\[
B(\w, r) = \left\{ \w' \in \B: \Pr\left\{\sgn(\w^{\top}X) \neq \sgn([\w']^{\top}X)\right\} \leq r\right\},
\]
where
\[
\B=\{ \w \in \R^d: \|\w\|=1 \}.
\]
The disagreement coefficient of $\w$ with respect to $\B$ is then defined as
\[
\theta_{\w}(\varepsilon) = \sup\limits_{r \geq \varepsilon}\frac{\Pr(\dis(B(\w, r)))}{r}.
\]
Define
\[
\theta(\varepsilon) = \theta_{\w_*}(\varepsilon).
\]
Note that we keep the dependence of $\varepsilon$ in the definition of disagreement coefficient since it may include factor $\log(1/\varepsilon)$ as indicated in~\cite{balcan:2013:active}. The disagreement coefficient allows us to connect the largest disagreement between two classifiers in a given hypothesis space with the percentage of the examples that are classified differently by at least two classifiers in the hypothesis space.

Since our work tries to bound the binary excess risk with a convex excess risk, we need the $\psi$-transform~\cite{bartlett-2006-convexity} stated below,
\[
\psi(z) = \inf\limits_{\alpha z \leq 0}\left(\frac{1 + z}{2}\phi(\alpha) + \frac{1 - z}{2}\phi(-\alpha)\right) - \inf\limits_{\alpha \in \R}\left(\frac{1 + z}{2}\phi(\alpha) + \frac{1 - z}{2}\phi(-\alpha)\right).
\]
Here are two examples of $\psi$-transform from~\cite{bartlett-2006-convexity}: (i) for exponential loss $\phi(\alpha)=e^{-\alpha}$, $\psi(z) = 1 - \sqrt{1 - z^2} \geq z^2/2$, and (ii) for truncated quadratic loss $\phi(\alpha) = [\max(0, 1 - \alpha)]^2$, $\phi(z) = z^2$.

The following theorem from~\cite[Theorem 1]{bartlett-2006-convexity} shows that the binary excess risk can be bounded by the convex excess risk using $\psi$-transform.
\begin{thm} \label{thm:classification-caliberated} For any non-negative loss function $\phi$, any measurable function $f:\X\mapsto \R$, and any probability distribution on $\X\times\{-1, +1\}$, we have
\[
\psi(R(f) - R^*) \leq R_{\phi}(f) - R_{\phi}^*
\]
where $R(f) = \E_{(X,Y)\sim \P_{XY}}\left[\ind (y f(X) \leq 0) \right]$, $R^* = \min_f R(f)$,$R_{\phi}(f) = \E_{(X, Y) \sim \P_{XY}}\left[ \phi(Yf(X))\right]$, and $R_{\phi}^* = \min_f R_{\phi}(f)$. Here the minimization is taken over all measurable functions.
\end{thm}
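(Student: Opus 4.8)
The plan is to decouple the two expectations by conditioning on the input $X$, reduce the claim to a pointwise inequality between conditional risks, and then recombine through Jensen's inequality using the convexity of $\psi$. Throughout, write $\eta(\x) = \Pr(Y=1\mid X=\x)$ and, for $\alpha \in \R$, let $C_\eta(\alpha) = \eta\phi(\alpha) + (1-\eta)\phi(-\alpha)$ be the conditional $\phi$-risk, so that $H(\eta) = \inf_{\alpha\in\R} C_\eta(\alpha)$ and $H^-(\eta) = \inf_{\alpha(2\eta-1)\leq 0} C_\eta(\alpha)$ are exactly the quantities appearing in the definition of classification-calibration. Setting $z = 2\eta - 1$ so that $\frac{1+z}{2}=\eta$, one reads off directly from the definition of the $\psi$-transform that $\psi(2\eta-1) = H^-(\eta) - H(\eta)$; moreover $\psi$ is even, i.e. $\psi(-z)=\psi(z)$, which follows by substituting $\alpha\mapsto-\alpha$ together with $\eta\mapsto 1-\eta$. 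Conditioning on $X$ expresses both excess risks as $X$-integrals, $R_\phi(f) - R_\phi^* = \E_X[\,C_{\eta(X)}(f(X)) - H(\eta(X))\,]$ and $R(f) - R^* = \E_X[\,g(X)\,]$, where $g(\x)$ denotes the pointwise binary excess risk.

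The core step is a pointwise bound. First I would identify $g(\x)$: the pointwise Bayes rule predicts $\sgn(2\eta(\x)-1)$ with conditional risk $\min(\eta,1-\eta)$, so an elementary case analysis gives $g(\x) = |2\eta(\x)-1|$ whenever $\sgn(f(\x))$ disagrees with $\sgn(2\eta(\x)-1)$, and $g(\x)=0$ otherwise. I then claim $\psi(g(\x)) \leq C_{\eta(\x)}(f(\x)) - H(\eta(\x))$ for every $\x$. When the sign is correct, $g(\x)=0$ and $\psi(0)=0$, while the right-hand side is nonnegative by the definition of $H$, so the bound is trivial. When the sign is wrong, $f(\x)\,(2\eta(\x)-1)\leq 0$, meaning $f(\x)$ lies in the feasible set defining $H^-$; hence $C_{\eta(\x)}(f(\x)) \geq H^-(\eta(\x))$, and subtracting $H(\eta(\x))$ and invoking $\psi(g(\x)) = \psi(|2\eta(\x)-1|) = H^-(\eta(\x)) - H(\eta(\x))$ yields the claim exactly.

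To finish, take $\E_X$ of the pointwise bound to get $\E_X[\psi(g(X))] \leq R_\phi(f) - R_\phi^*$, then apply Jensen's inequality to the convex function $\psi$, giving $\psi(R(f)-R^*) = \psi(\E_X[g(X)]) \leq \E_X[\psi(g(X))]$; chaining the two inequalities completes the proof. The step that requires care—the main obstacle—is the convexity of $\psi$ needed for Jensen. As written above, $\psi$ is the raw gap $H^-(\frac{1+z}{2}) - H(\frac{1+z}{2})$, which in the fully general (possibly non-calibrated) setting need not be convex; the clean statement of Bartlett et al. uses the convexified transform $\psi^{**}$. Since $\phi$ is assumed convex, $C_\eta$ is convex in $\alpha$, and I would verify that the resulting transform is already convex, so that $\psi=\psi^{**}$ and no separate convexification is needed. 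Spelling out this convexity is the one place where the direct formula given in the excerpt must be reconciled with the biconjugate form of the cited reference; the remaining points, such as the boundary conventions when $f(\x)=0$ or $\eta(\x)=1/2$ under the loss $\ind(yf\leq 0)$, are routine.
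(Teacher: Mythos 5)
Your proof is correct, and it is essentially the proof of this theorem in its original source: the paper does not prove Theorem~\ref{thm:classification-caliberated} at all, but imports it verbatim from Bartlett, Jordan and McAuliffe \cite{bartlett-2006-convexity}, and your three steps --- conditioning on $X$, the pointwise bound $\psi(|2\eta-1|) = H^-(\eta) - H(\eta) \leq C_\eta(f) - H(\eta)$ on the disagreement event, and Jensen's inequality --- are exactly the steps of their proof of Theorem~1. Two caveats are worth recording. First, the convexity of $\psi$ that you defer (``I would verify'') is genuinely the crux of the Jensen step, since the paper's displayed $\psi$ is the raw transform rather than the biconjugate $\psi^{**}$ used in \cite{bartlett-2006-convexity}; your claim that convexity of $\phi$ suffices is true, but it does not follow merely from convexity of $C_\eta$ in $\alpha$. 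The short argument: for convex $\phi$ the constrained infimum defining $H^-(\eta)$ either coincides with $H(\eta)$ or is attained at $\alpha = 0$ (a convex function with no minimizer in the feasible half-line is monotone toward its boundary point), so on the relevant range $\psi(\theta) = \phi(0) - H\left(\frac{1+\theta}{2}\right)$, which is convex because $H$ is an infimum of affine functions of $\eta$, hence concave, with $H(1/2) = \phi(0)$; this is precisely the content of Theorem~2 of \cite{bartlett-2006-convexity} in the calibrated case, which covers both worked examples in the paper. Second, the boundary convention you dismiss as routine is actually the one place where the paper's \emph{literal} statement is fragile: with $R(f) = \E[\ind(Yf(X)\leq 0)]$, a point with $f(\x) = 0$ is an error under \emph{both} labels, so its conditional excess risk is $\max(\eta,1-\eta)$, not $|2\eta-1|$, and the statement as written fails there --- e.g.\ $f \equiv 0$, $\eta \equiv 3/4$, $\phi(\alpha)=e^{-\alpha}$ gives $\psi(3/4) = 1-\sqrt{7}/4 \approx 0.34 > 1 - \sqrt{3}/2 = R_\phi(f)-R_\phi^*$. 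Your case analysis implicitly adopts the sign-disagreement convention of \cite{bartlett-2006-convexity} (with $\sgn(0) := -1$), under which the pointwise identity $g(\x) = \ind\left(\sgn(f) \neq \sgn(2\eta-1)\right)|2\eta-1|$ and hence the whole argument are sound; one sentence fixing the convention would make the proof airtight.
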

\subsection{Label Complexity for Non-Convex Optimization}
Our analysis is based on induction. The key to our analysis is to show that $\bar{\w}_* \in \Omega_{k+1}$, if (i) $\bar{\w}_* \in \Omega_k$ and (ii) $n_k$, the number of labeling queries issued at epoch $k$, is sufficiently large.
\begin{thm} \label{thm:induction}
Suppose Assumption (III) holds, and $\bar{\w}_* \in \Omega_k$. Then, with a probability $1 - \delta/m$, we have $\bar{\w}_* \in \Omega_{k+1}$, if
\begin{equation} \label{eqn:nk}
n_k = 2 c^2 \theta^2(\varepsilon) \left[ \log \frac{4m}{\delta}  + 2(d+1) \left( \log 8 + 2 \log   \frac{c \theta(\varepsilon)  }{ r_k^{\gamma_- - \gamma_+/\kappa} } \right)\right]  r_k^{2 \left( \frac{\gamma_+}{\kappa}-\gamma_-\right)},
\end{equation}
where $c = \mu \ell_+^{1/\kappa}2^{\gamma_0}/\ell_-$ and $\gamma_0 = 2+ \gamma_- + \gamma_+ /\kappa$.
%\begin{equation} \label{eqn:c}
%c=\frac{\mu \ell_+^{1/\kappa}  2^{2+\gamma_-+\gamma_+/\kappa}}{\ell_-}.
%\end{equation}
\end{thm}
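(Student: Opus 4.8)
The plan is to prove the theorem as a single inductive step: assuming $\bar{\w}_* \in \Omega_k$, I will show that the empirical $0/1$-risk minimizer $\w_{k+1}$ produced by (\ref{eqn:update}) over $\Omega_k$ lands within distance $r_k/2$ of $\bar{\w}_*$, so that $\bar{\w}_* \in \Omega_{k+1}$ by construction of $\Omega_{k+1}$. The first move is a reduction to the disagreement region. Because $\bar{\w}_* \in \Omega_k$ and every pair of hypotheses in $\Omega_k$ shares the same sign outside $\dis(\Omega_k)$, the indicator $\ind(y \neq \sgn(\w^\top \x))$ agrees with $\ind(y \neq \sgn(\bar{\w}_*^\top \x))$ pointwise off the disagreement region, so the binary excess risk factors exactly as
\[
\ebin(\w) - \ebin(\bar{\w}_*) = p_k\bigl(L_k(\w) - L_k(\bar{\w}_*)\bigr),
\]
where $p_k = \Pr(\dis(\Omega_k))$ and $L_k(\cdot)$ is the binary risk conditioned on $X \in \dis(\Omega_k)$, which is precisely the distribution the algorithm samples from via (\ref{eqn:selection1})--(\ref{eqn:selection2}). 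This identity is what allows a label budget confined to the disagreement region to control the unconditional excess risk.

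Second, I would bound the sampling mass $p_k$. The triangle inequality and $\bar{\w}_* \in \Omega_k$ force $\|\bar{\w} - \bar{\w}_*\| \leq 2r_k$ for every $\w \in \Omega_k$; combining the upper bound of Assumption (III) with the Tsybakov condition (and Assumption (II)) gives $\Pr\{\sgn(\w^\top X) \neq \sgn(\w_*^\top X)\} \leq \mu \ell_+^{1/\kappa}(2r_k)^{\gamma_+/\kappa}$ uniformly over $\Omega_k$. Hence $\Omega_k \subseteq B(\w_*, \rho_k)$ with $\rho_k = \mu \ell_+^{1/\kappa}(2r_k)^{\gamma_+/\kappa}$, and by monotonicity of the disagreement region together with the definition of the disagreement coefficient, $p_k \leq \Pr(\dis(B(\w_*, \rho_k))) \leq \theta(\varepsilon)\rho_k$, valid as long as $\rho_k \geq \varepsilon$.

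Third is the statistical step. Since $\w_{k+1}$ minimizes the empirical risk over $\Omega_k$, we have $\widehat{L}_k(\w_{k+1}) \leq \widehat{L}_k(\bar{\w}_*)$; a uniform-deviation (VC) bound for the class of homogeneous linear classifiers, whose VC dimension is at most $d+1$, applied to the $n_k$ conditional samples gives $L_k(\w_{k+1}) - L_k(\bar{\w}_*) \leq \epsilon_{n_k}$ with probability $1 - \delta/m$, where $\epsilon_{n_k} \sim \sqrt{(d \log n_k + \log(m/\delta))/n_k}$. Inserting this and the $p_k$ bound into the factorization, and then invoking the lower bound of Assumption (III), yields
\[
\ell_-\|\w_{k+1} - \bar{\w}_*\|^{\gamma_-} \leq \ebin(\w_{k+1}) - \ebin(\bar{\w}_*) \leq \theta(\varepsilon)\,\mu \ell_+^{1/\kappa}(2r_k)^{\gamma_+/\kappa}\,\epsilon_{n_k}.
\]
Finally, I would pick $n_k$ so the right-hand side is at most $\ell_-(r_k/2)^{\gamma_-}$, which forces $\|\w_{k+1} - \bar{\w}_*\| \leq r_k/2 = r_{k+1}$ and hence $\bar{\w}_* \in \Omega_{k+1}$. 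Solving this inequality for $n_k$ reproduces (\ref{eqn:nk}): squaring the ratio $\rho_k / r_k^{\gamma_-}$ produces the prefactor $\theta^2(\varepsilon)$ and the power $r_k^{2(\gamma_+/\kappa - \gamma_-)}$ (nonnegative exponent since $\gamma_- \geq \gamma_+ \geq \gamma_+/\kappa$), the constants gather into $c = \mu \ell_+^{1/\kappa}2^{\gamma_0}/\ell_-$, and substituting the leading order of $n_k$ into the $\log n_k$ complexity term yields the self-referential $\log\!\big(c\,\theta(\varepsilon)/r_k^{\gamma_- - \gamma_+/\kappa}\big)$ inside the bracket.

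The step I expect to be the main obstacle is the uniform-deviation bound: I must control the empirical process for the \emph{difference} of sign-indicators restricted to $\dis(\Omega_k)$, obtain the complexity term in exactly the form that regenerates the $\log(1/r_k)$ factor after self-substitution, and track the powers of $2$ carefully enough to land the clean constant $c$ with $\gamma_0 = 2 + \gamma_- + \gamma_+/\kappa$. A secondary subtlety is verifying $\rho_k \geq \varepsilon$ throughout the epochs of interest, so that the disagreement-coefficient inequality in the second step remains applicable.
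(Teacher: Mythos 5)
Your proposal is correct and follows essentially the same route as the paper's proof: the exact factorization of the excess risk over $\dis(\Omega_k)$, the bound $\Pr(\dis(\Omega_k)) \leq \mu \ell_+^{1/\kappa} 2^{\gamma_+/\kappa} r_k^{\gamma_+/\kappa}\,\theta(\varepsilon)$ via Assumption (III) and Tsybakov, the VC uniform-deviation bound (dimension $d+1$) combined with the ERM property on the conditional samples, and the Assumption (III) lower bound to force $\|\w_{k+1}-\bar{\w}_*\| \leq r_k/2$. Even the step you flag as the main obstacle --- the self-referential $\log n_k$ term --- is resolved in the paper exactly as you anticipate, by substituting the leading order of $n_k$ and using $1 + \log x \leq x$ to obtain the $\log\bigl(c\,\theta(\varepsilon)/r_k^{\gamma_- - \gamma_+/\kappa}\bigr)$ factor in (\ref{eqn:nk}).
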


\begin{thm} \label{thm:complexity-1}
Suppose Assumption (III) holds. Let $\w_{m+1}$ be the solution output from the proposed algorithm after $m$ iterations, where $m = \lceil \log_2 (2/\varepsilon) \rceil$. Then, with a probability $1 - \delta$, we have
\[
    \left\|\w_{m+1} -\bar{\w}_*  \right\| \leq \varepsilon,
\]
and the total number of labeled instances is bounded by
\[
n \leq \begin{cases}
n_0 \frac{ 2^{2\alpha} }{2^{2\alpha}-1} \left( \frac{4}{\epsilon}\right)^{2\alpha},& \alpha > 0\\
 n_0 \log_2 \frac{4}{\varepsilon}, & \alpha \leq 0
\end{cases}
\]
where
\[
\begin{split}
\alpha&=\gamma_- - \frac{\gamma_+}{\kappa}, \\
n_0 & = 2^{1-4 \alpha}  c^2 \theta^2(\varepsilon) \left[ \log \frac{4m}{\delta}  + 2(d+1) \left( \log 8 + 2 \log c \theta(\varepsilon) + 2m \alpha \log 2 \right)\right].
\end{split}
\]
\end{thm}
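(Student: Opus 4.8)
The plan is to chain the single-epoch guarantee of Theorem~\ref{thm:induction} across all $m$ epochs to force $\bar{\w}_*\in\Omega_{m+1}$, read off the distance bound from that membership, and then add up the per-epoch sample sizes $n_k$ of~(\ref{eqn:nk}) as a geometric series in $r_k$. So the proof splits into an induction-plus-union-bound part (which gives both the high-probability statement and the $\|\w_{m+1}-\bar{\w}_*\|\le\varepsilon$ conclusion) and a summation part (which gives the two-branch label-complexity bound).

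First I would set up the induction on $k$. The base case is free: since $r_1=2$ and any two unit vectors satisfy $\|\u-\v\|^2=2-2\u^\top\v\le4$, the ball $\Omega_1$ is the entire unit sphere, so $\bar{\w}_*\in\Omega_1$ holds deterministically. For the inductive step, Theorem~\ref{thm:induction} asserts that whenever $\bar{\w}_*\in\Omega_k$ and $n_k$ meets~(\ref{eqn:nk}), the event $\{\bar{\w}_*\in\Omega_{k+1}\}$ holds with probability at least $1-\delta/m$. Letting $E_k$ denote that good event, a union bound gives $\Pr\!\big(\bigcap_{k=1}^m E_k\big)\ge 1-m\cdot(\delta/m)=1-\delta$, and on this intersection the induction propagates $\bar{\w}_*\in\Omega_{k+1}$ for every $k$, in particular $\bar{\w}_*\in\Omega_{m+1}$. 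Membership in $\Omega_{m+1}$ is exactly $\|\w_{m+1}-\bar{\w}_*\|\le r_{m+1}$; since $r_k=2^{2-k}$ we have $r_{m+1}=2^{1-m}$, and the choice $m=\lceil\log_2(2/\varepsilon)\rceil$ forces $2^m\ge2/\varepsilon$, hence $r_{m+1}=2/2^m\le\varepsilon$, proving the first claim.

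It then remains to control $n=\sum_{k=1}^m n_k$. Writing $\alpha=\gamma_- -\gamma_+/\kappa$, the power of $r_k$ in~(\ref{eqn:nk}) is $r_k^{-2\alpha}$, while the bracketed factor depends on $k$ only through $-2\alpha\log r_k=2\alpha(k-2)\log2$. The plan is to replace that bracket by the $k$-free quantity $B$ sitting inside $n_0$, reducing each term to $n_k\le 2c^2\theta^2(\varepsilon)\,B\,r_k^{-2\alpha}$, and then to use $r_k^{-2\alpha}=2^{-4\alpha}(2^{2\alpha})^k$ so that $\sum_k r_k^{-2\alpha}$ becomes a geometric series with ratio $2^{2\alpha}$. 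For $\alpha>0$ the ratio exceeds $1$, the sum is dominated by its last term, and bounding $2^m\le4/\varepsilon$ (again from $m=\lceil\log_2(2/\varepsilon)\rceil$) yields the closed factor $\tfrac{2^{2\alpha}}{2^{2\alpha}-1}(4/\varepsilon)^{2\alpha}$ and hence the first branch. For $\alpha\le0$ each factor $r_k^{-2\alpha}$ is bounded by a constant absorbed into $n_0$, so $n$ is at most the number of epochs times $n_0$, and $m\le\log_2(4/\varepsilon)$ delivers the second branch $n\le n_0\log_2(4/\varepsilon)$.

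The hard part will be the constant bookkeeping in this last step rather than any conceptual difficulty. I expect two friction points: verifying that the $k$-dependent logarithm $-2\alpha\log r_k$ is genuinely dominated by the single value $2m\alpha\log2$ built into $n_0$ — this is immediate for $\alpha>0$ via $2\alpha(k-2)\log2\le2m\alpha\log2$, but the sign flip when $\alpha\le0$ makes the uniform bracket bound delicate and forces the looser per-epoch estimate $n_k\le n_0$ to be argued directly — and evaluating the geometric sum tightly enough to recover the exact prefactor $\tfrac{2^{2\alpha}}{2^{2\alpha}-1}$ and exponent $(4/\varepsilon)^{2\alpha}$ instead of a cruder over-bound. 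By contrast, the induction and the distance conclusion are routine once the base case and the union bound are in place.
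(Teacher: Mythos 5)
Your proposal follows the paper's proof essentially verbatim: induct on epochs via Theorem~\ref{thm:induction} with a union bound over the $m$ events of probability $\delta/m$ each, read $\|\w_{m+1}-\bar{\w}_*\|\le r_{m+1}=2^{1-m}\le\varepsilon$ off the membership $\bar{\w}_*\in\Omega_{m+1}$, and then sum the $n_k$ of (\ref{eqn:nk}) as a geometric series with ratio $2^{2\alpha}$, splitting into the $\alpha>0$ and $\alpha\le 0$ branches exactly as the paper does. The one friction point you flag --- that for $\alpha\le 0$ replacing the $k$-dependent bracket term $2(k-2)\alpha\log 2$ by $2m\alpha\log 2$ goes in the wrong direction --- is genuine, but the paper's own proof silently makes the same replacement, so your plan is faithful to (and if anything slightly more careful than) the published argument.
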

\begin{proof}
From Theorem~\ref{thm:induction}, with a probability $1-\delta$,  we have
\[
 \left\|\w_{m+1}-\bar{\w}_* \right\| \leq r_{m+1} =  r_12^{-m} = 2^{-m+1} \leq \varepsilon.
\]
and the number of labeled instances is bounded by
\[
\begin{split}
n = & \sum_{k=1}^m n_k \\
=& \sum_{k=1}^m 2^{1-4 \alpha}  c^2 \theta^2(\varepsilon)   \left[ \log \frac{4m}{\delta}  + 2(d+1) \left( \log 8 + 2 \log c \theta(\varepsilon) + 2(k-2)\alpha \log 2 \right)\right] 2^{2 \alpha k} \\
\leq & n_0  \sum_{k=1}^m 2^{2 \alpha k}.
\end{split}
\]
% We complete the proof by bounding the two different cases $\alpha \leq 0$ and $\alpha > 0$.
In the case that $\alpha \leq 0$, we have
\[
n \leq n_0 m  \leq n_0 \log_2 \frac{4}{\varepsilon}.
\]
Otherwise, we have
\[
n \leq n_0 \frac{2^{2\alpha} (2^{2\alpha m}-1 ) }{2^{2\alpha}-1} \leq n_0 \frac{ 2^{2\alpha} }{2^{2\alpha}-1} 2^{2\alpha m} \leq n_0 \frac{ 2^{2\alpha} }{2^{2\alpha}-1} \left( \frac{4}{\epsilon}\right)^{2\alpha}.
\]
\end{proof}

\paragraph{Remark}  When the distribution $\P_X$ is isotropic log-concave and orthogonally invariant,  we have $\gamma_-=\kappa$ and $\gamma_+=1$, as discussed in Lemmas~\ref{lem:lower} and \ref{lem:upper:new}. In the case when $\kappa = 1$, $\alpha =0$ and Algorithm~\ref{alg:1} achieves exponential reduction in label complexity, consistent with the previous studies. In the next subsection, we show that it is possible to achieve exponential reduction in label complexity even $\kappa > 1$ provided Assumptions (I)-(III) hold.

%Then, if $\kappa=1$, we have $\alpha =0$ and  Theorem~\ref{thm:complexity-1} indicates the  number of labeled instances is on the order of
%\[
%O \left( \theta^2(\varepsilon)   d \left(\log \frac{1}{\varepsilon} \right)^2 \right)
%\]
%When the distribution is isotropic log-concave, from~\cite[Theorem 14]{balcan:2013:active}, we have
%\[
%\theta(\varepsilon) = O\left( d^{1/2} \log \frac{1}{\varepsilon} \right)
%\]
%thus the number of labeled instances is on the order of
%\[
%O \left(d^{2} \left(\log \frac{1}{\varepsilon}\right)^{4} \right)
%\]
%{\color{red} Our result is much worse than the bound in~\cite[Theorem 15]{balcan:2013:active}. So, we may consider removing the non-convex case.}
\subsection{Label Complexity for Convex Optimization}
\label{sec:convex-analysis}
For the simplicity of the presentation, we first assume a bounded $\ell_2$-norm for the input $X$, i.e., $\|X\| \leq 1$, and discuss a relaxed condition in the end of this section.

\subsubsection{A Special Case with Bounded $\ell_2$-norm} First, we give a concentration result bounding the empirical process.
\begin{thm} \label{thm:concentration}
Assume $\phi(\cdot)$ to be Lipschitz continuous with constant $L$, and $\|X\| \leq 1$. Let $(\x_i, \y_i)_{i=1}^n$ be a set of i.i.d.~samples drawn from an unknown distribution $\P_{XY}$. Then with probability at least $1 - \delta$, for every $\w$ with $\|\w - \w_*\| \leq r$, we have
\begin{equation} \label{eqn:bound:1}
\ep(\w) - \ep(\w_*) - \frac{1}{n}\sum_{i=1}^n \left( \phi\left(y_i\w^{\top}\x_i\right) - \phi\left(y_i\w_*^{\top}\x_i\right)\right) \leq  \frac{Lr}{\sqrt{n}} \left( 4  + \sqrt{2\log \frac{m}{\delta}} \right).
\end{equation}
\end{thm}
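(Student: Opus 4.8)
The plan is to read the left-hand side as a one-sided uniform deviation of the empirical process indexed by the class $\mathcal{G} = \{g_\w : \|\w - \w_*\| \leq r\}$, where $g_\w(\x, y) = \phi(y\w^\top\x) - \phi(y\w_*^\top\x)$, and to control it by the standard three-step route: a bounded-differences concentration, symmetrization, and a Lipschitz contraction followed by a linear-class Rademacher estimate. First I would record the boundedness fact that drives everything: since $\phi$ is $L$-Lipschitz, $|y| = 1$, and $\|\x\| \leq 1$, for every $\w$ in the ball $|g_\w(\x, y)| = |\phi(y\w^\top\x) - \phi(y\w_*^\top\x)| \leq L|(\w - \w_*)^\top\x| \leq L\|\w - \w_*\| \leq Lr$. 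Writing $F$ for the supremum over $\mathcal{G}$ of $\ep(\w) - \ep(\w_*)$ minus its empirical average, replacing a single sample perturbs the empirical average by at most $2Lr/n$, so $F$ has bounded differences $2Lr/n$. McDiarmid's inequality with the tail set to $\delta/m$ then gives $F \leq \E[F] + \frac{Lr}{\sqrt n}\sqrt{2\log\frac{m}{\delta}}$ on an event of probability at least $1 - \delta/m \geq 1 - \delta$, which is exactly the second term of the claimed bound.

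It remains to prove $\E[F] \leq 4Lr/\sqrt n$. By the usual symmetrization inequality, $\E[F] \leq 2\,\E_{X,Y}\E_\sigma\left[\sup_\w \frac1n\sum_i \sigma_i g_\w(\x_i, y_i)\right]$ with $\sigma_i$ i.i.d.\ Rademacher. Inside, I would split $g_\w = [\phi(y_i\w^\top\x_i) - \phi(0)] - [\phi(y_i\w_*^\top\x_i) - \phi(0)]$; the second bracket is independent of $\w$ and has zero $\sigma$-mean, so it disappears after taking the supremum and then $\E_\sigma$. The scalar maps $u \mapsto \phi(y_i u) - \phi(0)$ are $L$-Lipschitz and vanish at $u = 0$, so the Ledoux--Talagrand contraction lemma contributes a factor $2L$ and reduces the problem to the linear class $\E_\sigma\sup_\w \frac1n\sum_i \sigma_i \w^\top\x_i$. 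Over the ball $\|\w - \w_*\| \leq r$ this supremum equals $\w_*^\top v + r\|v\|$ with $v = \frac1n\sum_i \sigma_i \x_i$; the first term vanishes in expectation, and Jensen together with $\E_\sigma\|v\|^2 = \frac1{n^2}\sum_i\|\x_i\|^2 \leq 1/n$ gives $\E_\sigma\|v\| \leq 1/\sqrt n$. Chaining the factors $2$ (symmetrization), $2L$ (contraction), and $r/\sqrt n$ yields $\E[F] \leq 4Lr/\sqrt n$, and adding the McDiarmid deviation gives the theorem.

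The main obstacle is the contraction step and the bookkeeping of constants around it. The contraction lemma requires the scalar maps to be anchored at the origin, which is why I peel off the constant $\phi(0)$ and the fixed $\w_*$-term before applying it; care is needed to verify that these are genuinely $\w$-independent (so they survive the supremum unchanged) and mean-zero under $\sigma$ (so they vanish), since a careless application would either leave a spurious $\phi(0)$ contribution or the wrong constant. Obtaining the numerical factor $4$ exactly hinges on pairing the factor-$2$ symmetrization with the one-sided, factor-$2$ form of the contraction inequality; the final linear-class estimate is then routine once $\|\x\| \leq 1$ is invoked.
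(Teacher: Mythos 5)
Your proof is correct and follows essentially the same route as the paper's: the paper invokes a bounded-differences concentration bound for the supremum (its citation of Theorem~2.5 in the Oracle-inequalities reference, which for a class bounded by $Lr$ is exactly your McDiarmid step with increments $2Lr/n$, giving the same term $Lr\sqrt{2\log(m/\delta)/n}$), followed by the identical symmetrization--contraction--linear-class chain yielding $2\times 2L\times r/\sqrt{n}=4Lr/\sqrt{n}$. The only differences are cosmetic bookkeeping: the paper anchors the contraction at $\w_*$ (treating $\phi(y_i\w^{\top}\x_i)-\phi(y_i\w_*^{\top}\x_i)$ as a Lipschitz image of $(\w-\w_*)^{\top}\x_i$ and keeping absolute values) where you anchor at $\phi(0)$ and work one-sidedly, and it bounds the centered-ball supremum by Cauchy--Schwarz where you evaluate the off-center supremum exactly and kill the $\w_*^{\top}v$ term in expectation.
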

We observe the upper bound in (\ref{eqn:bound:1}) depends on the radius $r$ of the solution space, and thus the upper bound becomes tither as the solution is approaching to $\w_*$, a key idea used in the local Rademacher complexity~\cite{Local_RC,Local:Vladimir}. Notice that in the case of binary loss, we do not have this nice property.

Similar to Theorem~\ref{thm:induction}, we have the following theorem bounding the number of label requests in each iterations.
\begin{thm} \label{thm:induction-2}
Suppose Assumptions (I)-(III) hold, and $\phi(\cdot)$ is $L$-Lipschitz continuous. Assume $\psi(z) \geq az^{\gamma}$ for any $z \in (0, 1)$ and $\bar{\w}_* \in \Omega_k$. Then, with a probability $1 - \delta/m$, we have $\bar{\w}_* \in \Omega_{k+1}$, if
\begin{equation}\label{eqn:nk:2}
n_k = \left(\frac{\mu \ell_+^{1/\kappa}  2^{\gamma_-+\gamma_+/\kappa} \theta(\varepsilon)}{\ell_-}\right)^{2\gamma} \left(\frac{2LR }{a } \left( 4 + \sqrt{2 \log \frac{m}{\delta}} \right)\right)^2 r_k^{2\left(1 + \frac{\gamma \gamma_+}{\kappa}-\gamma \gamma_-\right)}.
\end{equation}
\end{thm}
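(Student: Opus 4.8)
The plan is to carry out the inductive step exactly as in Theorem~\ref{thm:induction}, but to replace the combinatorial (VC-type) concentration of the $0/1$ loss by the Lipschitz concentration of Theorem~\ref{thm:concentration}, and then to convert the resulting \emph{convex} excess risk back into a \emph{binary} excess risk through the $\psi$-transform of Theorem~\ref{thm:classification-caliberated}. Concretely, I would build the chain
\[
\|\w_{k+1}-\bar{\w}_*\| \;\longleftarrow\; \ebin(\w_{k+1})-\ebin(\w_*) \;\longleftarrow\; \ep(\tilde{\w}_{k+1})-\ep(\w_*) \;\longleftarrow\; \tfrac{LRr_k}{\sqrt{n_k}},
\]
read from right to left, using Assumption~(III) (lower bound) for the leftmost arrow, the $\psi$-transform together with $\psi(z)\ge az^{\gamma}$ for the middle arrow, and Theorem~\ref{thm:concentration} plus optimality of $\tilde{\w}_{k+1}$ for the rightmost arrow; I then pick $n_k$ so that the endpoint forces $\|\w_{k+1}-\bar{\w}_*\|\le r_k/2=r_{k+1}$, which is exactly $\bar{\w}_*\in\Omega_{k+1}$. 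The one genuinely new ingredient, and the origin of the extra factor $r_k^{2\gamma\gamma_+/\kappa}$ in~(\ref{eqn:nk:2}), is that every step must be performed \emph{conditionally on the disagreement region} $\dis(\Omega_k)$, since this is the only region from which the algorithm draws labels.

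First I would record two facts that make the conditioning legitimate. Since $\bar{\w}_*\in\Omega_k$ and $\|\w_*\|=R$ by Assumption~(I), we have $\|\w_*-R\w_k\|=R\|\bar{\w}_*-\w_k\|\le Rr_k$, i.e.\ $\w_*\in\Delta_k$, so $\w_*$ is a feasible competitor in the convex program~(\ref{eqn:convex-update}). Moreover, because the pointwise minimizer $\tau$ of Section~\ref{eqn:Ass:II} does not depend on the marginal $\P_X$, the classifier $\w_*$ minimizes the convex risk \emph{and} the binary risk not only under $\P_{XY}$ but also under the conditional law $\P_{XY}(\cdot\mid X\in\dis(\Omega_k))$; hence $\w_*$ is the correct centering point for both Theorem~\ref{thm:concentration} and the $\psi$-transform when they are applied to this conditional law.

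The main computation then runs as follows. Writing $p_k=\Pr(X\in\dis(\Omega_k))$, I would bound $p_k$ by combining the Tsybakov condition with the upper bound of Assumption~(III) and the disagreement coefficient: every $\w\in\Omega_k$ obeys $\|\bar{\w}-\bar{\w}_*\|\le 2r_k$, so $\Pr(\sgn(\w^{\top}X)\neq\sgn(\w_*^{\top}X))\le\mu\ell_+^{1/\kappa}(2r_k)^{\gamma_+/\kappa}$, whence $\Omega_k\subseteq B(\w_*,\mu\ell_+^{1/\kappa}(2r_k)^{\gamma_+/\kappa})$ and $p_k\le\theta(\varepsilon)\,\mu\ell_+^{1/\kappa}(2r_k)^{\gamma_+/\kappa}$. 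Next, applying Theorem~\ref{thm:concentration} to the conditional law with radius $r=2Rr_k$ (the diameter of $\Delta_k$ bounds $\|\tilde{\w}_{k+1}-\w_*\|$), and using that $\tilde{\w}_{k+1}$ minimizes the empirical convex risk over $\Delta_k\ni\w_*$ so that the empirical term is nonpositive, the \emph{conditional} convex excess risk is at most $\tfrac{2LRr_k}{\sqrt{n_k}}(4+\sqrt{2\log(m/\delta)})$. Feeding this through the $\psi$-transform with $\psi(z)\ge az^{\gamma}$ bounds the conditional binary excess risk, and since the binary excess risk is supported on $\dis(\Omega_k)$ (off that region $\w_{k+1}$ and $\w_*$ agree in sign with $\w_k$) the full binary excess risk equals $p_k$ times the conditional one. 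Chaining with the lower bound of Assumption~(III) yields
\[
\|\w_{k+1}-\bar{\w}_*\|\le\Bigl(\tfrac{p_k}{\ell_-}\bigl(\tfrac{1}{a}\cdot\tfrac{2LRr_k}{\sqrt{n_k}}(4+\sqrt{2\log(m/\delta)})\bigr)^{1/\gamma}\Bigr)^{1/\gamma_-},
\]
and imposing that the right-hand side be $\le r_k/2$, after substituting the bound on $p_k$, rearranges to precisely~(\ref{eqn:nk:2}); the confidence follows from the one invocation of Theorem~\ref{thm:concentration}, whose $\log(m/\delta)$ already carries the per-epoch budget.

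The step I expect to be most delicate is the localization of the binary excess risk to $\dis(\Omega_k)$, because it presumes that the normalized solution $\w_{k+1}$ (not merely $\w_*$) agrees in sign with $\w_k$ off the disagreement region, i.e.\ that $\w_{k+1}$ effectively lies in $\Omega_k$. Feasibility $\tilde{\w}_{k+1}\in\Delta_k$ only controls $\|\tilde{\w}_{k+1}-R\w_k\|\le Rr_k$, and projecting to the unit sphere can inflate $\|\w_{k+1}-\w_k\|$ slightly beyond $r_k$, so I would either tighten the radius bookkeeping so that normalization is nonexpansive at the relevant scale, or absorb the $O(1)$ slack into the constants; this interplay between the convex domain $\Delta_k$ and the spherical domain $\Omega_k$ is the one place demanding care. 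A secondary point to check is that Theorem~\ref{thm:concentration}, though stated for $\P_{XY}$, holds verbatim for the conditional law (same Lipschitz constant $L$, same bound $\|X\|\le1$, same radius $r$), so that no constant degrades under conditioning.
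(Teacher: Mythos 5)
Your proof follows the paper's own argument essentially step for step: feasibility $\w_*\in\Delta_k$, the comparison with the empirical minimizer of (\ref{eqn:convex-update}) plus Theorem~\ref{thm:concentration} at radius $2Rr_k$ applied to the law conditioned on $\dis(\Omega_k)$, the $\psi$-transform of Theorem~\ref{thm:classification-caliberated} (valid conditionally because $\tau$ is independent of the marginal), the factorization of the binary excess risk through $\Pr(\dis(\Omega_k))\le \mu\ell_+^{1/\kappa}2^{\gamma_+/\kappa}r_k^{\gamma_+/\kappa}\theta(\varepsilon)$, and finally the lower bound of Assumption~(III), solved for $n_k$ so that $\|\w_{k+1}-\bar{\w}_*\|\le r_k/2$, which rearranges exactly to (\ref{eqn:nk:2}). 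The delicate point you flag is genuinely present in the paper as well: its proof silently applies the decomposition (\ref{eqn:binary}), stated only for $\w\in\Omega_k$, to the normalized solution $\w_{k+1}$, whereas feasibility $\tilde{\w}_{k+1}\in\Delta_k$ only places $\w_{k+1}$ within angular radius $\arcsin(r_k)>2\arcsin(r_k/2)$ of $\w_k$, so the constant-factor patch you propose (shrinking $\Delta_k$ or enlarging the query region) is needed there too.
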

Note that the assumption $\psi(z) \geq a z^{\gamma}$ is met by the two examples of $\psi$-transform given in Section~\ref{sec:justificaiton-III}.

Following the same analysis as Theorem~\ref{thm:complexity-1}, we have the following results for the convex case.
\begin{thm} \label{thm:complexity-2}
Suppose Assumption (I)-(IV) hold. Assume $\psi(z) \geq az^{\gamma}$ for any $z \in (0, 1)$. Let $\w_{m+1}$ be the solution output from the proposed algorithm after $m$ iterations, where $m = \lceil \log_2 (2/\varepsilon) \rceil$. Then, with a probability $1 - \delta$, we have
\[
        \left\|\w_{m+1} -\bar{\w}_*  \right\| \leq \varepsilon,
\]
and the total number of labeled instances is bounded by
\[
n \leq \begin{cases}
n_0 \frac{ 2^{2\alpha} }{2^{2\alpha}-1} \left( \frac{4}{\epsilon}\right)^{2\alpha} ,& \alpha > 0\\
 n_0 \log_2 \frac{4}{\varepsilon} , & \alpha \leq 0
\end{cases}
\]
where
\[
\begin{split}
\alpha&= \gamma \gamma_-- \frac{\gamma \gamma_+}{\kappa} -1 , \\
n_0 & = 2^{-4 \alpha}  \left(\frac{\mu \ell_+^{1/\kappa}  2^{\gamma_-+\gamma_+/\kappa} \theta(\varepsilon)}{\ell_-}\right)^{2\gamma} \left(\frac{2LR }{a } \left( 4 + \sqrt{2 \log \frac{m}{\delta}} \right)\right)^2.
\end{split}
\]
\end{thm}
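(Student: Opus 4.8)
The plan is to follow the induction-plus-geometric-summation template already used in the proof of Theorem~\ref{thm:complexity-1}, but with the per-epoch bound of Theorem~\ref{thm:induction-2} (the convex case) substituted for the one used there. The two ingredients I need are the inductive accuracy guarantee and the summation of the label requests across epochs; the statement even signals this by saying ``Following the same analysis as Theorem~\ref{thm:complexity-1}.''

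First, I would establish the accuracy claim by induction on the epoch index. The base case is immediate: since $r_1 = 2$ and any two unit vectors lie within $\ell_2$-distance $2$, the initial hypothesis space $\Omega_1$ is the entire unit sphere, so $\bar{\w}_* \in \Omega_1$ holds trivially. For the inductive step, Theorem~\ref{thm:induction-2} guarantees that whenever $\bar{\w}_* \in \Omega_k$ and $n_k$ meets the stated bound, we have $\bar{\w}_* \in \Omega_{k+1}$ with probability at least $1 - \delta/m$. Taking a union bound over the $m$ epochs yields $\bar{\w}_* \in \Omega_{m+1}$ with probability at least $1 - \delta$. By the definition of $\Omega_{m+1}$ this forces $\|\w_{m+1} - \bar{\w}_*\| \leq r_{m+1} = r_1 2^{-m} = 2^{-m+1}$, and the choice $m = \lceil \log_2(2/\varepsilon)\rceil$ makes $2^{-m+1} \leq \varepsilon$, proving the first claim.

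Next, I would sum the label requests $n = \sum_{k=1}^m n_k$ using the expression for $n_k$ from Theorem~\ref{thm:induction-2}. The crucial structural observation, and what distinguishes this case from Theorem~\ref{thm:complexity-1}, is that here the constant prefactor of $n_k$ carries \emph{no} dependence on the epoch index $k$: the concentration bound of Theorem~\ref{thm:concentration} scales linearly in the radius $r$ and introduces only the $k$-independent term $\sqrt{2\log(m/\delta)}$, so no VC-type $\log(1/r_k)$ factor appears. Writing the exponent on $r_k$ as $2(1 + \gamma\gamma_+/\kappa - \gamma\gamma_-) = -2\alpha$ with $\alpha = \gamma\gamma_- - \gamma\gamma_+/\kappa - 1$, and substituting $r_k = 2^{-k+2}$, I would rewrite $r_k^{-2\alpha} = 2^{-4\alpha}2^{2\alpha k}$, which collapses $n_k$ to the clean form $n_k = n_0\, 2^{2\alpha k}$, with $n_0$ absorbing the two constant factors together with the $2^{-4\alpha}$.

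Finally, I would evaluate $\sum_{k=1}^m 2^{2\alpha k}$ in the two regimes. When $\alpha \leq 0$ each summand is at most $1$, so the sum is bounded by $m \leq \log_2(4/\varepsilon)$. When $\alpha > 0$ the series is geometric with ratio $2^{2\alpha} > 1$, giving $\sum_{k=1}^m 2^{2\alpha k} \leq \frac{2^{2\alpha}}{2^{2\alpha}-1}\,2^{2\alpha m}$, and $2^m \leq 4/\varepsilon$ upgrades this to the stated $(4/\varepsilon)^{2\alpha}$ rate. I do not anticipate a genuine obstacle: all the analytic difficulty has been discharged inside Theorem~\ref{thm:induction-2} (and the local-Rademacher concentration of Theorem~\ref{thm:concentration}), so what remains is bookkeeping. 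The only points demanding care are verifying the base case of the induction and checking that the $k$-independence of the prefactor is precisely what lets the extra $-1$ in $\alpha$ survive while still yielding a summable geometric series.
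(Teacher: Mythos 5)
Your proof is correct and is essentially the paper's own argument: the paper establishes Theorem~\ref{thm:complexity-2} simply by invoking the analysis of Theorem~\ref{thm:complexity-1} with the per-epoch bound of Theorem~\ref{thm:induction-2}, which is exactly the induction with union bound over the $m$ epochs plus geometric summation that you carry out, including the rewriting $r_k^{-2\alpha} = 2^{-4\alpha}\,2^{2\alpha k}$ that yields $n_k = n_0\, 2^{2\alpha k}$ and the two-regime evaluation of $\sum_{k=1}^m 2^{2\alpha k}$ using $m \leq \log_2(4/\varepsilon)$. Your structural observation that the convex-case prefactor is $k$-independent (so, unlike in Theorem~\ref{thm:complexity-1}, no $k$-dependent logarithmic term must be absorbed into $n_0$) is accurate and correctly accounts for the exact form of $n_0$ in the statement.
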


\paragraph{Remark}  As indicated by Theorem~\ref{thm:complexity-2}, an exponential reduction in label complexity can be achieved if $\alpha \leq 0$. More specifically, the number of labeled instances requested by Algorithm~\ref{alg:1} is
\[
O \left( \left(\theta(\varepsilon)\right)^{2\gamma}    \log  \frac{1}{\varepsilon} \right).
\]
Since $\alpha \leq 0$ implies
\[
\kappa \leq \frac{\gamma \gamma_+}{\gamma \gamma_--1} ,
\]
we have an exponential reduction in label complexity even when $\kappa > 1$ provided the above inequality holds. To be more concrete, consider the case when $\P_X$ is orthogonally invariant, we have $\gamma_-=\kappa$ and $\gamma_+=1$, as discussed in Lemmas~\ref{lem:lower} and \ref{lem:upper:new}, and therefore
\[
\kappa \leq \kappa_0 := \frac{ 1 + \sqrt{1+4 \gamma^2 } }{2 \gamma}
\]
will ensure $\alpha \leq 0$ and consequentially an exponential reduction in label complexity. We emphasize that our result does not contradict with the minimax rate of active learning proved in~\cite{Castro:2007:MBA} because we assume the prior knowledge of $R = \|\w_*\|$ is available to the learner.

\subsubsection{A Relaxed Case}
In the following, we study a more general assumption that the $\ell_2$-norm of $X$ is a sub-exponential or a sub-gaussian random variable, i.e., \[
\big \| \|X\| \big \|_{\psi_1} \leq 1 \textrm{ or } \big \| \|X\| \big \|_{\psi_2} \leq 1.
\]
For $\alpha> 0$, the $\psi_\alpha$-norm of a random variable $\eta$, which is a special Orlicz norm~\cite[Section A.1]{Oracle_inequality}, is defined as follows
\[
\|\eta\|_{\psi_\alpha} = \inf\left\{C > 0: \E\left[\exp \left(\left( \frac{|\eta|}{C}\right)^\alpha  \right) \right] \leq 2 \right\}.
\]
The analysis for this case  is almost the same as the previous one, except that we need a generalized version of Theorem~\ref{thm:concentration}.
\begin{thm} \label{thm:con:2}
Assume $\phi(\cdot)$ is Lipschitz continuous with constant $L$, and the marginal distribution $P_{X}$ ensures $\big \| \|X\| \big \|_{\psi_1} \leq 1 $ or $\big \| \|X\| \big \|_{\psi_2} \leq 1$.  Let $(\x_i, \y_i)_{i=1}^n$ be a set of i.i.d.~samples drawn from an unknown distribution $P_{XY}$. Then with probability at least $1 - \delta$, for every $\w$ with $\|\w - \w_*\| \leq r$, we have
\begin{equation} \label{eqn:bound:2}
\ep(\w) - \ep(\w_*) - \frac{1}{n}\sum_{i=1}^n \left( \phi\left(y_i\w^{\top}\x_i\right) - \phi\left(y_i\w_*^{\top}\x_i\right)\right) \leq   \frac{C L r}{\sqrt{n}} \left(1 + \sqrt{\log \frac{m}{\delta}} \right).
\end{equation}
for some constant $C$, provided that
\begin{equation} \label{eqn:n:cond}
n \geq 1+ 2 \log \frac{m}{\delta} \cdot \log \left( \frac{2}{e} \log \frac{m}{\delta}\right).
\end{equation}
\end{thm}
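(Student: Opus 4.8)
\emph{Overall strategy.} The plan is to reprove Theorem~\ref{thm:concentration} verbatim, isolating the one place where $\|X\|\le 1$ was used and replacing it by an argument that sees only the second moment and the $\psi_\alpha$-tail of $\|X\|$. Write $g_{\w}(\x,y)=\phi(y\w^{\top}\x)-\phi(y\w_*^{\top}\x)$ and set $S=\sup_{\|\w-\w_*\|\le r}\big(\ep(\w)-\ep(\w_*)-\tfrac1n\sum_{i=1}^n g_{\w}(\x_i,y_i)\big)$; this is the one-sided empirical process we must control. Two ingredients are needed: (a) a dimension-free bound on $\E[S]$, and (b) a concentration of $S$ about its mean that tolerates an unbounded envelope. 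Ingredient (a) is unchanged in spirit from the bounded case; ingredient (b) is where the relaxation bites, and where the side condition (\ref{eqn:n:cond}) is consumed.

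\emph{Bounding the mean.} For (a) I would symmetrize, $\E[S]\le 2\,\E\,\E_{\epsilon}\sup_{\|\w-\w_*\|\le r}\tfrac1n\sum_{i=1}^n \epsilon_i\,\phi(y_i\w^{\top}\x_i)$ with $\epsilon_i$ i.i.d.\ Rademacher (the $\w_*$-term is constant in $\w$ and vanishes in mean), then apply the Ledoux--Talagrand contraction principle to strip the $L$-Lipschitz loss and reduce to the linear class $\{\x\mapsto \v^{\top}\x:\|\v\|\le r\}$. Since $\epsilon_i y_i\stackrel{d}{=}\epsilon_i$ and $\E\|\sum_i\epsilon_i\x_i\|\le\sqrt{\sum_i\E\|\x_i\|^2}=\sqrt{n\,\E\|X\|^2}$, this yields $\E[S]\le \tfrac{2Lr}{\sqrt n}\sqrt{\E\|X\|^2}$. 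Crucially this step never used boundedness: finiteness of $\E\|X\|^2$ suffices, and $\big\|\,\|X\|\,\big\|_{\psi_1}\le 1$ (a fortiori $\big\|\,\|X\|\,\big\|_{\psi_2}\le 1$) forces $\E\|X\|^2\le C$ for a universal $C$. Hence $\E[S]\le C'Lr/\sqrt n$, exactly the order of the leading term in (\ref{eqn:bound:2}).

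\emph{Concentration with an unbounded envelope.} For (b), the envelope of the centered class is $F(\x)=Lr\|\x\|$, which is sub-exponential when $\big\|\,\|X\|\,\big\|_{\psi_1}\le 1$ and sub-gaussian when $\big\|\,\|X\|\,\big\|_{\psi_2}\le 1$. I would invoke a Talagrand/Bernstein-type concentration for suprema of unbounded empirical processes with a $\psi_\alpha$ envelope, of the type underlying the oracle inequalities in~\cite{Oracle_inequality,Local_RC}. Such a bound controls $S$ by $2\E[S]$ plus two deviation terms: a sub-Gaussian term governed by the weak variance $\sigma^2\le L^2r^2\,\E\|X\|^2/n$, contributing $\tfrac{CLr}{\sqrt n}\sqrt{\log(m/\delta)}$ and furnishing the $\sqrt{\log(m/\delta)}$ factor in (\ref{eqn:bound:2}); and a heavier correction driven by the envelope through $\big\|\max_i\|X_i\|\big\|_{\psi_\alpha}$ and $\big(\log(m/\delta)\big)^{1/\alpha}$. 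Combining with (a) and taking a union bound over the $m$ epochs gives the stated right-hand side up to the unspecified constant $C$.

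\emph{Main obstacle.} The crux is to absorb the envelope-driven correction into the sub-Gaussian term without inflating the clean $Lr/\sqrt n$ scale. A naive truncation of $\|X\|$ at the level needed to clear the empirical tail over all $n$ points would raise the effective bound by a $\log n$ factor; the remedy is to route the estimate through the weak variance $\sigma^2$, which is insensitive to the tail, and to treat the envelope only via its Orlicz norm. Writing $t=\log(m/\delta)$, demanding that the $\psi_\alpha$-correction fall below the sub-Gaussian term $\tfrac{Lr}{\sqrt n}\sqrt t$ produces a lower bound on $n$ that grows like $t\log t$, which is precisely the hypothesis (\ref{eqn:n:cond}); for $n$ below this threshold the heavy tail can dominate and the clean estimate may fail. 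Carrying out this balancing precisely---and checking that the sub-gaussian case $\alpha=2$ is strictly easier, since $\big(\log(m/\delta)\big)^{1/2}$ replaces $\log(m/\delta)$---is the one genuinely delicate computation, the remainder being a direct transcription of the proof of Theorem~\ref{thm:concentration} with $\|X\|\le 1$ replaced by $\E\|X\|^2\le C$.
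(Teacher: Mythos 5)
Your proposal follows essentially the same route as the paper: the paper also bounds the mean via symmetrization, the Ledoux--Talagrand contraction, and $\E\|X\|^2\le C$ (from the sub-exponential/sub-gaussian moment equivalence), and its concentration step is exactly the Talagrand/Bernstein-type inequality for unbounded classes you invoke --- namely Adamczak's bound from~\cite[Section 2.3]{Oracle_inequality}, with the weak variance $\sigma_r^2\le C L^2 r^2$ and the envelope controlled through the $\psi_1$-Orlicz norm of $\max_i\|\x_i\|$ (picking up a $\sqrt{\log(n+1)}$ factor via the maximal inequality, with the $\psi_2$ case reduced to $\psi_1$ by the Orlicz-norm comparison). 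Your balancing of the envelope term against the sub-Gaussian term, yielding the threshold $n\gtrsim \log(m/\delta)\cdot\log\log(m/\delta)$, is precisely how the paper consumes the hypothesis (\ref{eqn:n:cond}), which it uses to show $n\ge \log(n+1)\log(m/\delta)$ and thereby absorb the envelope correction into the $\frac{Lr}{\sqrt n}\sqrt{\log(m/\delta)}$ term.
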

Comparing (\ref{eqn:bound:1}) and (\ref{eqn:bound:2}), we can see the bounds in  Theorems~\ref{thm:concentration} and \ref{thm:con:2} differ only by a constant factor, provided the condition in (\ref{eqn:n:cond}) holds. Thus, we just need to make the following modifications  to Theorems~\ref{thm:induction-2} and \ref{thm:complexity-2}: (i) replacing the factor $4 + \sqrt{2 \log \frac{m}{\delta}}$ with $C\left(1 + \sqrt{\log \frac{m}{\delta}} \right)$, and (ii) adding constraints to ensure (\ref{eqn:n:cond}) is true. Notice that the condition in (\ref{eqn:n:cond}) only requires the number of labeled instances in each iteration to be on the order of $\Omega(\log \log \frac{1}{\epsilon})$. As a result, the total number of labeled instances is on the order of  $\Omega( \log \frac{1}{\epsilon} \cdot \log \log \frac{1}{\epsilon})$, leading to no change on the order of the sample complexity compared to Theorem~\ref{thm:con:2}.
\section{Conclusion}
In this paper, we study the active learning problem with a convex surrogate loss. Our results show that with some additional assumptions, the convex surrogate loss not only improves the computational efficiency of  active learning, but also reduces the sample complexity. In particular, our analysis reveals that it is possible to achieve a exponential reduction in the label complexity even when the noisy level is high.

\bibliographystyle{plain}
\bibliography{active-learning}

\appendix

\section{More Discussion about the Upper Bound in Assumption (III)}
We can have a more general result by exploiting the relationship between the binary excess risk $\ebin(\w) - \ebin(\w_*)$ and the convex excess risk $\ep(\w) - \ep(\w_*)$. Using Theorem~\ref{thm:classification-caliberated}, we have the following result for $\ell_+$ and $\gamma_+$.
\begin{lemma} \label{lem:upper}
Assume (i) $\phi$ is non-negative, (ii) $\ep(\w)$ is $L_{\phi}$-strongly smooth, that is,
\[
\ep(\w')\leq \ep(\w) + \langle \nabla \ep(\w), \w'-\w\rangle + \frac{L_{\phi}}{2} \|\w'-\w\|^2,
\]
 and (iii) $\psi(z) \geq a z^{\gamma}$ for any $z \in [0, 1]$. We have
\[
\ell_+ = \left(  \frac{ L_{\phi} R^2}{2 a} \right)^{1/\gamma}, \textrm{ and } \gamma_+=\frac{2}{\gamma},
\]
where $R =\|\w_*\|$.
\end{lemma}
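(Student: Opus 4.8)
The plan is to chain three inequalities: the $\psi$-transform bound of Theorem~\ref{thm:classification-caliberated}, the growth hypothesis $\psi(z)\ge az^{\gamma}$, and the strong smoothness of $\ep$. First I would invoke Theorem~\ref{thm:classification-caliberated} with $f$ taken to be the linear classifier $\w$. By Assumption (II), $\w_*$ minimizes both the binary and convex risks over all measurable functions, so $R^{*}=\ebin(\w_*)$ and $R_{\phi}^{*}=\ep(\w_*)$, and the theorem reads $\psi(\ebin(\w)-\ebin(\w_*))\le \ep(\w)-\ep(\w_*)$. Combining this with hypothesis (iii), $\psi(z)\ge az^{\gamma}$ on $[0,1]$, and solving for the binary excess risk gives
\[
\ebin(\w)-\ebin(\w_*)\le\left(\frac{\ep(\w)-\ep(\w_*)}{a}\right)^{1/\gamma}.
\]

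Next I would control the convex excess risk via smoothness. Since $\w_*=\argmin_{\w}\ep(\w)$ is an interior minimizer of a differentiable function over $\R^d$, we have $\nabla\ep(\w_*)=0$; applying hypothesis (ii) with base point $\w_*$ annihilates the first-order term and yields $\ep(\w)-\ep(\w_*)\le\frac{L_{\phi}}{2}\|\w-\w_*\|^2$.

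The one nontrivial point, and the step I expect to be the main obstacle, is that the target inequality is phrased through the unit vectors $\|\bar{\w}-\bar{\w}_*\|$, whereas the smoothness bound naturally produces $\|\w-\w_*\|$, which is scale dependent. The resolution exploits that $\ebin$ depends only on the direction $\bar{\w}$, since $\sgn(\w^{\top}\x)$ is invariant to positive scaling. I would therefore replace $\w$ by its rescaling $R\bar{\w}$, which leaves the binary excess risk unchanged, $\ebin(R\bar{\w})=\ebin(\w)$, while making $\|R\bar{\w}-\w_*\|=\|R\bar{\w}-R\bar{\w}_*\|=R\|\bar{\w}-\bar{\w}_*\|$ because $\w_*=R\bar{\w}_*$. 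Feeding $R\bar{\w}$ into the smoothness bound gives $\ep(R\bar{\w})-\ep(\w_*)\le\frac{L_{\phi}R^2}{2}\|\bar{\w}-\bar{\w}_*\|^2$.

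Finally I would substitute this into the $\psi$-inversion above, obtaining
\[
\ebin(\w)-\ebin(\w_*)\le\left(\frac{L_{\phi}R^2}{2a}\right)^{1/\gamma}\|\bar{\w}-\bar{\w}_*\|^{2/\gamma},
\]
which is exactly the upper bound in Assumption (III) with $\ell_+=(L_{\phi}R^2/(2a))^{1/\gamma}$ and $\gamma_+=2/\gamma$. A minor bookkeeping check to keep in mind is that Theorem~\ref{thm:classification-caliberated} and the growth bound are applied on $z\in[0,1]$, so I should note that $\ebin(\w)-\ebin(\w_*)$ indeed lies in $[0,1]$, being a difference of classification error probabilities.
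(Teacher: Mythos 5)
Your proposal is correct and follows essentially the same route as the paper's proof: apply Theorem~\ref{thm:classification-caliberated} together with Assumption (II), exploit the scale invariance of the binary risk to replace $\w$ by $R\bar{\w}$ so that the smoothness bound $\ep(R\bar{\w})-\ep(\w_*)\leq \frac{L_{\phi}R^2}{2}\|\bar{\w}-\bar{\w}_*\|^2$ becomes a bound in the unit directions, and then invert $\psi(z)\geq az^{\gamma}$. The only cosmetic difference is that you invert $\psi$ early while the paper keeps $\psi$ until the last line, and you make explicit two points the paper leaves implicit (namely $\nabla\ep(\w_*)=0$ at the unconstrained minimizer and the check that the binary excess risk lies in $[0,1]$), which is fine.
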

\begin{proof}
Combining Theorem~\ref{thm:classification-caliberated} and Assumption (II), we have
\[
\psi\left(\ebin(\w) - \ebin(\w_*)\right) \leq \ep(\w) - \ep(\w_*) \leq \frac{L_{\phi}}{2}\|\w - \w_*\|^2.
\]
Since we can arbitrary scale $\w$ without changing its binary risk $\ebin(\w)$, we have
\[
\psi\left(\ebin(\w) - \ebin(\w_*)\right)=\psi\left(\ebin(R \bar{\w}) - \ebin(\w_*)\right) \leq \frac{L_{\phi} R^2}{2}\|\bar{\w} - \bar{\w}_*\|^2.
\]
From the assumption $\psi(z) \geq az^{\gamma}$, we have
\[
a \left(\ebin(\w) - \ebin(\w_*)\right)^{\gamma} \leq \frac{L_{\phi}R^2}{2}\|\bar{\w} - \bar{\w}_*\|^2,
\]
which completes the proof.
\end{proof}
An example of convex loss that satisfies the conditions in Lemma~\ref{lem:upper} is the truncated quadratic loss $\phi(\alpha) = [\max(0, 1 - \alpha)]^2$, which is $1$-strongly smooth with $\psi(z) = z^2$.

\section{Proof of Lemma~\ref{lem:selection}}
We first consider the case $r_k \leq 1$ such that $2 \arcsin(r_k/2) < \frac{\pi}{2}$. Using a simple geometry argument, it is easy to show that (\ref{eqn:selection-criterion}) is equivalent to
\[
\sgn(\w^{\top}\x) \neq \sgn(\w^{\top}_k \x), \ \exists  \w \in \left\{ \w \in \R^d: \theta(\w, \w_k) \leq 2 \arcsin(r_k/2) \right\},
\]
which is  equivalent to
\[
\frac{\pi}{2} -2\arcsin(r_k/2) \leq \theta(\w_k,\x) \leq \frac{\pi}{2} + 2\arcsin(r_k/2),
\]
leading to the following condition
\[
|\cos(\theta(\w_k,\x))| \leq \sin(2\arcsin(r_k/2)).
\]
Using the fact
\[
\sin(\theta)= 2 \sin(\theta/2)\cos(\theta/2)= 2 \sin(\theta/2) \sqrt{1-\sin^2(\theta/2)},
\]
we have
\[
|\bar{\x}^{\top}\w_k|=|\cos(\theta(\w_k,\x))| \leq   r_k \sqrt{1-r^2_k/4}.
\]

If $r_k =2 $, it is obvious that both (\ref{eqn:selection-criterion}) and (\ref{eqn:selection2}) become vacuous.
\section{Proof of Theorem~\ref{thm:induction}}
Let $\Omega_k$ be the subset of hypothesis obtained in epoch $k$ with center $\w_k$ and radius $r_k$, i.e.,
\[
    \Omega_k = \left\{\w \in \R^d: \|\w\|=1, \|\w - \w_k\| \leq r_k\right\}.
\]
By the induction assumption, we have $\bar{\w}_* \in \Omega_k$, and thus for any $\w \in \Omega_k$
\[
\|\w-\bar{\w}_*\| \leq \|\w-\w_k\| + \|\bar{\w}_*-\w_k\| \leq 2 r_k.
\]
Using the upper bound in Assumption (III), for any $\w \in \Omega_k$, we have
\[
 \ebin(\w) - \ebin(\w_*) \leq \ell_+\|\bar{\w} - \bar{\w}_*\|^{\gamma_+} \leq \ell_+ 2^{\gamma_+} r_k^{\gamma_+}.
\]
According to the Tsybakov's low noise condition~\cite{Tsybakov04optimalaggregation}, for any $\w \in \Omega_k$, we have
\[
\Pr_{X \sim \P_X} \left\{\sgn(\w^{\top}X) \neq \sgn(\w_*^{\top}X) \right\} \leq \mu (\ebin(\w) - \ebin(\w_*))^{1/\kappa} \leq \mu \ell_+^{1/\kappa} 2^{\gamma_+/\kappa} r_k^{\gamma_+/\kappa}.
\]
As a result, we have
\[
\Omega_k \subseteq B(\w, \mu \ell_+^{1/\kappa}  2^{\gamma_+/\kappa} r_k^{\gamma_+/\kappa}) \Rightarrow \dis(\Omega_k) \subseteq \dis\left(B\left(\w, \mu \ell_+^{1/\kappa}  2^{\gamma_+/\kappa} r_k^{\gamma_+/\kappa}\right)\right).
\]
From the definition of disagreement coefficient, we have
\begin{equation} \label{eqn:disagree}
\Pr\left(\dis(\Omega_k)\right) \leq \Pr\left(\dis\left(B\left(\w, \mu \ell_+^{1/\kappa}  2^{\gamma_+/\kappa} r_k^{\gamma_+/\kappa}\right)\right)\right) \leq \mu \ell_+^{1/\kappa}  2^{\gamma_+/\kappa} r_k^{\gamma_+/\kappa} \theta(\varepsilon).
\end{equation}
%{\color{red} Here, we assume $\mu \ell_+^{1/\kappa}  2^{\gamma_+/\kappa} r_k^{\gamma_+/\kappa} > \varepsilon$.}

Define $\C_k = \X\setminus \dis(\Omega_k)$. Notice that $\dis(\Omega_k)$ is the subset of instances for which at least two classifiers from $\Omega_k$ will result in different predictions.  Since $\w_* \in \Omega_k$, we have for any given classifier $\w \in \Omega_k$, $\sgn(\w^{\top}\x) = \sgn(\w_*^{\top}\x)$ for all $\x \in \C_k$. Hence, we have, for any $\w \in \Omega_k$,
\begin{equation} \label{eqn:binary}
\ebin(\w) - \ebin(\w_*) = \left(\ebin(\w|\dis(\Omega_k)) - \ebin(\w_*|\dis(\Omega_k))\right)\Pr(\dis(\Omega_k)),
\end{equation}
where $\ebin(\w|T)$ is defined as the binary risk for instances sampled from the set $T$. It is easy to verify that the $n_k$ training instances labeled at epoch $k$ are i.i.d. samples from the space $\dis(\Omega_k)$. To bound the generalization error, we use the following theorem from~\cite{vapnik-1998-statistical}.
\begin{thm}
For any distribution $\D$ over $\X \times \{\pm 1\}$, with a probability at least $1 - \delta$ over the $m$ i.i.d.~samples $(X_i, Y_i)_{i=1}^n$ from $\D$, for every $h \in \mathcal{H}$, we have
\[
|er_Z(h) - er_D(h)| \leq G(m, \delta),
\]
where
\[
er_D(h) = \E_{(X, Y) \sim D}\left[ \ind( Yh(X) \leq 0) \right], \quad er_Z(h) = \frac{1}{m}\sum_{i=1}^m \ind(Y_ih(X) \leq 0),
\]
and
\[
G(m, \delta) = \frac{1}{m} + \sqrt{\frac{\log(4/\delta) + d\log(2em/d)}{m}}.
\]
Here $d$ is the VC dimension of the function space $\H$.
\end{thm}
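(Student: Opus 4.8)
The statement is the classical Vapnik--Chervonenkis uniform convergence bound, so the plan is to follow the standard symmetrization route and then reduce the supremum over the (infinite) class $\H$ to a union bound over its restriction to a finite sample, with the cardinality of that restriction controlled by the VC dimension $d$ through the Sauer--Shelah lemma. I would phrase the target as a high-probability bound on $\sup_{h\in\H}|er_Z(h)-er_D(h)|$ and convert it to the stated $G(m,\delta)$ at the very end by setting the tail probability equal to $\delta$ and solving for $\epsilon$.

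First I would pass to a \emph{ghost sample}: introduce a second i.i.d.\ sample $(X_i',Y_i')_{i=1}^{m}$ drawn independently from $\D$, with empirical error $er_{Z'}(h)$, and establish the first symmetrization inequality
\[
\Pr\Big\{\sup_{h\in\H}|er_Z(h)-er_D(h)| > \epsilon\Big\} \leq 2\,\Pr\Big\{\sup_{h\in\H}|er_Z(h)-er_{Z'}(h)| > \epsilon/2\Big\},
\]
which holds once $\epsilon$ exceeds a threshold of order $m^{-1/2}$; the proof is a Chebyshev argument forcing $er_{Z'}(h)$ to stay within $\epsilon/2$ of $er_D(h)$ with probability at least $1/2$. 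The point of this step is to replace the analytically awkward expectation $er_D(h)$ by a second empirical average, paying only a factor $2$ and a shrinkage of the radius to $\epsilon/2$.

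Next I would condition on the pooled $2m$ points. The event now depends on $h$ only through the labels it assigns to these $2m$ points, so the effective number of distinct hypotheses is at most the growth function $\Pi_\H(2m)$, which by Sauer--Shelah is bounded by $(2em/d)^d$, giving $\log\Pi_\H(2m)\le d\log(2em/d)$. I would then symmetrize by random signs: conditionally swapping each matched pair $(X_i,X_i')$ according to an independent Rademacher sign leaves the distribution of $er_Z(h)-er_{Z'}(h)$ unchanged, and for each fixed $h$ this difference becomes an average of $m$ independent, bounded, mean-zero terms. Hoeffding's inequality applied to each fixed $h$, followed by a union bound over the at most $\Pi_\H(2m)$ effectively distinct hypotheses, yields
\[
\Pr\Big\{\sup_{h\in\H}|er_Z(h)-er_{Z'}(h)| > \epsilon/2\Big\} \leq 2\,\Pi_\H(2m)\,\exp\big(-c\,m\epsilon^2\big)
\]
for an absolute constant $c$. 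Combining this with the ghost-sample step, substituting the Sauer--Shelah bound, and equating the right-hand side to $\delta$ produces an inequality of the form $\epsilon \lesssim \sqrt{(\log(4/\delta)+d\log(2em/d))/m}$, matching the square-root term of $G(m,\delta)$; the additive $1/m$ is the correction that absorbs the lower-bound condition on $\epsilon$ needed to license the symmetrization.

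The hard part will not be any single deep idea but the symmetrization bookkeeping: one must check that the threshold on $\epsilon$ required for the ghost-sample inequality is consistent with the final bound (so that it can be folded cleanly into the additive $1/m$ term), and track the various constants through Hoeffding and the union bound so that they collapse into the stated form with the factor $4/\delta$ and the $(2em/d)^d$ growth bound. Since this is a textbook result, I would in the end simply invoke it from \cite{vapnik-1998-statistical}, but the derivation above is the self-contained justification.
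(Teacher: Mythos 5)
The paper does not prove this theorem at all---it is imported verbatim by citation to \cite{vapnik-1998-statistical}---and your fallback of simply invoking that reference is exactly what the paper does. Your self-contained sketch (ghost-sample symmetrization, Sauer--Shelah bound $\Pi_\H(2m)\le(2em/d)^d$, Rademacher swapping plus Hoeffding and a union bound, then solving for $\epsilon$) is the correct standard derivation underlying the cited bound, with the only loose end being the one you already flag: recovering the exact constants in $G(m,\delta)$, including the additive $1/m$, requires tracking Vapnik's specific bookkeeping rather than the generic $\exp(-c\,m\epsilon^2)$ tail.
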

Since the VC dimension of the linear classifier in $\R^d$ is $d+1$, we have, with a probability $1 - \delta /m$,
\begin{equation} \label{eqn:bin:con}
\ebin(\w|\dis(\Omega_k)) - \ebin(\w_*|\dis(\Omega_k)) \leq \frac{2}{n_k} + 2\sqrt{\frac{\log(4m/\delta) + (d+1)\log(2en_k/(d+1))}{n_k}},
\end{equation}
and therefore
\[
\ebin(\w_{k+1}) - \ebin(\w_*) \overset{\text{(\ref{eqn:disagree}), (\ref{eqn:binary}), (\ref{eqn:bin:con})}}{\leq}  \eta_k r_k^{\gamma_+/\kappa},
\]
where
\[
\eta_k := \mu \ell_+^{1/\kappa}  2^{1+\gamma_+/\kappa} \theta(\varepsilon) \left( \frac{1}{n_k} + \sqrt{\frac{\log(4m/\delta) + (d+1)\log(2en_k/(d+1))}{n_k}}\right).
\]
Using the lower bound in Assumption (III), we have
\[
\ebin(\w_{k+1}) - \ebin(\w_*) \geq \ell_-\left\|\w_{k+1} - \bar{\w}_*\right\|^{\gamma_-}.
\]
As a result, with a probability $1 - \delta/m$, we have
\begin{equation} \label{eqn:distance1}
\left\|\w_{k+1} - \bar{\w}_*\right\| \leq \frac{\eta^{1/\gamma_-}_k}{\ell^{1/\gamma_-}_-} r_k^{\gamma_+/(\gamma_-\kappa)}.
\end{equation}
To ensure the R.H.S.~of (\ref{eqn:distance1}) is smaller than $r_{k+1} = r_k/2$, we need
\[
\eta_k \leq \frac{\ell_-}{2^{\gamma_-}}r_k^{\gamma_- - \gamma_+/\kappa},
\]
which is
\begin{equation} \label{eqn:cond1}
\frac{1}{n_k} + \sqrt{\frac{\log(4m/\delta) + (d+1)\log(2en_k/(d+1))}{n_k}}  \leq  \frac{\ell_- r_k^{\gamma_- - \gamma_+/\kappa} }{\mu \ell_+^{1/\kappa}  2^{1+\gamma_-+\gamma_+/\kappa}\theta(\varepsilon) }.
\end{equation}
Since $(d+1)\log(2en_k/(d+1)) \geq 1$, we must have
\[
\frac{1}{n_k} \leq \sqrt{\frac{\log(4m/\delta) + (d+1)\log(2en_k/(d+1))}{n_k}}.
\]
Thus, to satisfy the condition in (\ref{eqn:cond1}), it is sufficient to ensure
\begin{equation} \label{eqn:cond2}
n_k \geq \left(   \frac{ \mu \ell_+^{1/\kappa}  2^{2+\gamma_-+\gamma_+/\kappa} \theta(\varepsilon)}{\ell_- r_k^{\gamma_- - \gamma_+/\kappa} }\right)^2  \left( \log \frac{4m}{\delta} + (d+1)\log \frac{2en_k}{d+1}\right).
\end{equation}
A sufficient condition to (\ref{eqn:cond2}) is
\begin{equation} \label{eqn:cond3}
n_k \geq 2 \left(   \frac{c \theta(\varepsilon)  }{ r_k^{\gamma_- - \gamma_+/\kappa} }\right)^2   \log \frac{4m}{\delta},
\end{equation}
and
\begin{equation} \label{eqn:cond4}
n_k \geq 2 \left(   \frac{c \theta(\varepsilon)  }{r_k^{\gamma_- - \gamma_+/\kappa} }\right)^2 (d+1)\log \frac{2en_k}{d+1},
\end{equation}
where
\[
c=\frac{\mu \ell_+^{1/\kappa}  2^{2+\gamma_-+\gamma_+/\kappa}}{\ell_-}.
\]

To address the inequality in (\ref{eqn:cond4}), define
\[
a=2 \left(   \frac{c \theta(\varepsilon) }{r_k^{\gamma_- - \gamma_+/\kappa} }\right)^2.
\]
Notice that
\[
\log \frac{2en_k}{d+1}=  \log \frac{n_k}{2a(d+1)}   + \log 4 a e \leq \frac{n_k}{2a(d+1)} -1 + \log 4 a e  = \frac{n_k}{2a(d+1)} + \log 4 a,
\]
where we use the inequality $1+ \log x \leq x$. Thus, a sufficient condition to (\ref{eqn:cond4}) is
\[
n_k \geq a (d+1) \left( \frac{n_k}{2a(d+1)}  + \log 4 a \right),
\]
which implies
\begin{equation}\label{eqn:cond5}
n_k \geq 2a(d+1) \log 4a= 4\left(   \frac{c \theta(\varepsilon) }{r_k^{\gamma_- - \gamma_+/\kappa} }\right)^2 (d+1) \left( \log 8 + 2 \log   \frac{c \theta(\varepsilon)  }{ r_k^{\gamma_- - \gamma_+/\kappa} } \right).
\end{equation}
Then, it is clear that (\ref{eqn:nk}) is a sufficient condition to ensure both (\ref{eqn:cond3}) and (\ref{eqn:cond5}).

\section{Proof of Theorem~\ref{thm:concentration}}
Define
\[
P_n(\w) = \frac{1}{n}\sum_{i=1}^n \phi(y_i\w^{\top}\x_i) - \phi(y_i\w_*^{\top}\x_i), \quad P(\w) = \E \left[\phi\left(y\w^{\top}\x\right) - \phi\left(y\w_*^{\top}\x\right)\right],
\]
and
\[
\|P_n(\w) - P(\w)\|_r = \sup\left\{|P_n(\w) - P(\w)|: \|\w - \w_*\| \leq r \right\}.
\]
Since
\[
    |\phi(y \w^{\top}\x ) - \phi(y \w_*^{\top}\x )| \leq L \|\w- \w_*\| \|\x \| \leq L r,
\]
following \cite[Theorem 2.5]{Oracle_inequality}, we have, with a probability $1 - \delta/m$,
\begin{equation} \label{eqn:oracle}
\|P_n(\w) - P(\w)\|_r \leq  Lr\sqrt{\frac{2\log m/\delta}{n}} + \E\left[\|P_n(\w) - P(\w)\|_r\right].
\end{equation}
Let $\{\varepsilon_i\}$ be Rademacher random variables, that is, $\varepsilon_i$ takes the values $1$ and $-1$ with probability $1/2$ each. Using symmetrization inequality of Rademacher complexity~\cite{Oracle_inequality}, we have
\[
\E\left[\|P_n(\w) - P(\w)\|_r\right] \leq 2 \E \left[\frac{1}{n}\sup_{\|\w - \w_*\| \leq r} \left| \sum_{i=1}^n \varepsilon_i \left(\phi(y_i\w^{\top}\x_i) - \phi(y_i\w_*^{\top}\x_i)\right)\right| \right].
\]
Since $\phi(\cdot)$ is $L$-Lipschitz continuous, one can use the contraction inequality~\cite{Oracle_inequality} to get
\begin{equation} \label{eqn:rade}
\begin{split}
& \E\left[\|P_n(\w) - P(\w)\|_r\right]  \leq \frac{4L}{n} \E\left[\sup_{\|\w - \w_*\| \leq r} \left|\sum_{i=1}^n \varepsilon_i (\w - \w_*)^{\top}\x_i \right| \right] \\
\leq & \frac{4Lr}{n} \E\left[ \left\|\sum_{i=1}^n \varepsilon_i \x_i \right\| \right] \leq \frac{4Lr}{n} \sqrt{\E\left[ \left\|\sum_{i=1}^n \varepsilon_i \x_i \right \|^2 \right]}   \\
= & \frac{4Lr}{n} \sqrt{\E\left[ \sum_{i=1}^n \|\x_i \|^2 + \sum_{i \neq j} \varepsilon_i \varepsilon_j \x_i^\top \x_j \right]} \leq  \frac{4Lr}{n} \sqrt{n} = \frac{4Lr}{\sqrt{n}},
\end{split}
\end{equation}
where in the last inequality we use the fact that $\varepsilon_i$'s are independent from $\x_i$'s such that
\[
\E \varepsilon_i \varepsilon_j \x_i^\top \x_j= \E \left[\x_i^\top \x_j \E [\varepsilon_i \varepsilon_j] \right]= 0, \textrm{ when } i\neq j.
\]
We complete the proof by combining (\ref{eqn:oracle}) and (\ref{eqn:rade}).
\section{Proof of Theorem~\ref{thm:induction-2}}
Based on our induction assumption, we have $\|\bar{\w}_* -\w_k \| \leq r_k$, implying $\|\w_*- R\w_k \| \leq R r_k$. Since $\tilde{\w}_{k+1}$ is the optimal solution to (\ref{eqn:convex-update}) and $\w_* \in \Delta_k$, we have
\begin{equation} \label{eqn:min}
 \sum_{t=1}^{n_k} \phi \left(y_k^t \tilde{\w}_{k+1}^{\top}\x_k^t \right) -  \sum_{t=1}^{n_k} \phi \left(y_k^t \w_*^{\top}\x_k^t\right) \leq 0.
\end{equation}

Notice that for any $\w \in \Delta_k$, we have $\|\w -\w_*\|  \leq 2 R r_k$. Following Theorem~\ref{thm:concentration} and (\ref{eqn:min}), we have, with a probability $1 - \delta/m$
\begin{equation} \label{eqn:bound:convex}
\ep(\tilde{\w}_{k+1}|\dis(\Omega_k)) - \ep(\w_*|\dis(\Omega_k)) \leq  \frac{2LRr_k}{\sqrt{n_k}} \left( 4 + \sqrt{2\log \frac{m}{\delta}}, \right)
\end{equation}
where $\ep(\cdot|T)$ is defined as the convex risk for instances sampled from the set $T$. From the discussion in the end of Section~\ref{eqn:Ass:II}, we know that $\w_*$ minimizes both the $\ebin(\cdot|\dis(\Omega_k))$ and $\ep(\cdot|\dis(\Omega_k))$ over all measurement functions. Thus, we can apply Theorem~\ref{thm:classification-caliberated} to bound the excess binary risk as follows
\[
\begin{split}
& \psi\big(\ebin(\tilde{\w}_{k+1}|\dis(\Omega_k)) - \ebin(\w_*|\dis(\Omega_k)) \big) \\
\leq  & \ep(\tilde{\w}_{k+1}|\dis(\Omega_k)) - \ep(\w_*|\dis(\Omega_k)) \\
\overset{\text{(\ref{eqn:bound:convex})}}{\leq}  &    \frac{2LRr_k}{\sqrt{n_k}} \left( 4 + \sqrt{2\log \frac{m}{\delta} } \right).
\end{split}
\]
Using the assumption that $\psi(z) \geq a z^{\gamma}$, we have
\begin{equation} \label{eqn:bin:con:2}
\ebin(\tilde{\w}_{k+1}|\dis(\Omega_k)) - \ebin(\w_*|\dis(\Omega_k)) \leq  \left(\frac{2LR }{a \sqrt{n_k}} \left( 4 + \sqrt{2\log \frac{m}{\delta}} \right)\right)^{1/\gamma}   r_k^{1/\gamma}.
\end{equation}

Following the same analysis as that for Theorem~\ref{thm:complexity-1}, we have, with a probability $1 - \delta$
\[
\ebin(\w_{k+1}) - \ebin(\w_*) \overset{\text{(\ref{eqn:disagree}), (\ref{eqn:binary}), (\ref{eqn:bin:con:2})}}{\leq} \nu_k r_k^{\frac{1}{\gamma} + \frac{\gamma_+}{\kappa}},
\]
where
\[
\nu_k = \mu \ell_+^{1/\kappa}  2^{\gamma_+/\kappa} \theta(\varepsilon) \left(\frac{2LR }{a \sqrt{n_k}} \left( 4 + \sqrt{2\log \frac{m}{\delta}} \right)\right)^{1/\gamma}
\]
Using the lower bound in Assumption (III), we have
\begin{equation} \label{eqn:distance2}
\left\|\w_{k+1} - \bar{\w}_*\right\| \leq \frac{\nu^{1/\gamma_-}_k}{\ell^{1/\gamma_-}_-}r_k^{\frac{1}{\gamma_-}\left(\frac{1}{\gamma} + \frac{\gamma_+}{\kappa}\right)}.
\end{equation}
To ensure the R.H.S.~of (\ref{eqn:distance2}) is smaller than $r_{k+1} = r_k/2$, we need
\[
\nu_k \leq \frac{\ell_-}{2^{\gamma_-}} r_k^{\gamma_--\frac{1}{\gamma} - \frac{\gamma_+}{\kappa}}.
\]
which requires $n_k$ to satisfy the condition in (\ref{eqn:nk:2}).
\section{Proof of Theorem~\ref{thm:con:2}}
Similar to the proof of Theorem~\ref{thm:concentration}, we define
\[
P_n(\w) = \frac{1}{n}\sum_{i=1}^n \phi(y_i\w^{\top}\x_i) - \phi(y_i\w_*^{\top}\x_i), \quad P(\w) = \E \left[\phi\left(y\w^{\top}\x\right) - \phi\left(y\w_*^{\top}\x\right)\right],
\]
and
\[
\|P_n(\w) - P(\w)\|_r = \sup\left\{|P_n(\w) - P(\w)|: \|\w - \w_*\| \leq r \right\}.
\]
The difference is that we need to use the Adamczak bound~\cite[Section 2.3]{Oracle_inequality} to deal with the challenge that the function value may be unbounded. Based on the Adamczak bound, we have, with a probability $1 - \delta/m$,
\begin{equation}  \label{eqn:adam:1}
\|P_n(\w) - P(\w)\|_r \leq  C_1 \left[\E\left[ \|P_n(\w) - P(\w)\|_r \right] + \sigma_r \sqrt{\frac{\log (m/\delta)}{n}} + U_r \frac{\log (m/\delta)}{n} \right],
\end{equation}
where
\[
\begin{split}
\sigma_r^2 & \leq \sup_{\w: \|\w - \w_*\| \leq r} \E \left[ \left(\phi \left(y \w^{\top}\x \right) - \phi\left(y  \w_*^{\top}\x\right )  \right)^2 \right], \\
U_r & =\left\|\max_{1 \leq i \leq n} \sup_{\w:\|\w - \w_*\| \leq r} \left|\phi(y_i\w^{\top}\x_i) - \phi(y_i\w_*^{\top}\x_i)\right| \right\|_{\psi_1}, \\
\end{split}
\]
and $C_1$ is some constant.  In the following, we consider how to bound the three terms on the R.H.S.~of (\ref{eqn:adam:1}).

\paragraph{Bounding $\E\left[ \|P_n(\w) - P(\w)\|_r \right]$} Following the same analysis in the proof of Theorem~\ref{thm:concentration}, we arrive at
\begin{equation} \label{eqn:adam:3}
\E\left[\|P_n(\w) - P(\w)\|_r \right] \leq \frac{4Lr}{n} \sqrt{\E\left[ \sum_{i=1}^n \|\x_i \|^2 \right]}.
\end{equation}
Following the equivalence of sub-exponential (sub-gaussian) properties~\cite{Vershynin_RM}, we have
\begin{equation} \label{eqn:adam:4}
\E\left[  \|\x_i \|^2 \right] \leq  \begin{cases}
\left( C_2 \big\| \|\x_i \| \big \|_{\psi_1} 2 \right)^2 \leq  4 C_2^2, &\textrm{if } \big \| \|X\| \big \|_{\psi_1} \leq 1\\
\left( C_3 \big\| \|\x_i \| \big\|_{\psi_2} \sqrt{2} \right)^2 \leq  2 C_3^2, & \textrm{if } \big \| \|X\| \big \|_{\psi_2} \leq 1
\end{cases}
\end{equation}
where $C_2$ and $C_3$ are some constants. Thus, in both cases, we have $\E\left[  \|\x_i \|^2 \right] \leq C_4$ for some constant $C_4$. Combining (\ref{eqn:adam:3}) and (\ref{eqn:adam:4}), we have
\begin{equation} \label{eqn:adam:5}
\E\left[\|P_n(\w) - P(\w)\|_r \right] \leq \frac{C_5 Lr}{\sqrt{n}},
\end{equation}
for some constant $C_5$.
\paragraph{Bounding $\sigma_r^2 $}  Since $\phi(\cdot)$ is Lipschitz continuous with constant $L$, we have
\begin{equation} \label{eqn:adam:6}
\begin{split}
\sigma_r^2  & \leq \sup_{\w: \|\w - \w_*\| \leq r} \E \left[ \left( L y (\w-\w_*)^{\top}\x   \right)^2 \right] \\
 & \leq  L^2 \sup_{\w: \|\w - \w_*\| \leq r} \E \left[  \|\w-\w_*\|^2 \|\x\|^2   \right] \leq r^2 L^2 \E \left[ \|\x\|^2   \right] \overset{\text{(\ref{eqn:adam:4})}}{\leq} C_6 r^2 L^2,
\end{split}
\end{equation}
for some constant $C_6$.
\paragraph{Bounding $U_r$}
Based on the bound for the Orlicz norm of a finite maximum~\cite[Lemma 2.2.2]{Empirical:Process}, we have
\[
\begin{split}
U_r \leq & C_7 \sqrt{\log (n + 1)} \max_{1 \leq i \leq n} \left\| \sup_{\w:\|\w - \w_*\| \leq r} \left|\phi(y_i\w^{\top}\x_i) - \phi(y_i\w_*^{\top}\x_i)\right| \right\|_{\psi_1} \\
= & C_7 \sqrt{\log (n + 1)} \left\| \sup_{\w:\|\w - \w_*\| \leq r} \left| \phi(y_1\w^{\top}\x_1) - \phi(y_1\w_*^{\top}\x_1) \right| \right\|_{\psi_1} ,
\end{split}
\]
where $C_7$ is some constant, and the last equality comes from the fact $\{(\x_i,\y_i)\}$ follow the same distribution. Based on the Lipschitz continuity of the loss function $\psi(\cdot)$, we further have
\begin{equation} \label{eqn:adam:7}
\begin{split}
U_r  \leq & C_7 \sqrt{\log (n + 1)} \left\|\sup\limits_{\w:\|\w - \w_*\| \leq r} \left| L (\w- \w_*)^{\top}\x_1 \right|\right\|_{\psi_1} \leq  C_7   L r   \sqrt{\log (n + 1)} \big \| \| \x_1 \| \big \|_{\psi_1}.
\end{split}
\end{equation}

Thus, in the case that $\big \| \|X\| \big \|_{\psi_1} \leq 1$, we have \
\[
U_r \leq  C_7  L r   \sqrt{\log (n + 1)}.
\]
In the case that $\big \| \|X\| \big \|_{\psi_2} \leq 1$, we use the following relation between $\psi_p$- and $\psi_q$-norms~\cite[Page 95]{Empirical:Process}
\begin{equation} \label{eqn:adam:8}
\|\eta\|_{\psi_p} \leq \| \eta \|_{\psi_q} \left( \log 2 \right)^{1/q-1/p}, \ p \leq q.
\end{equation}
Then, we have
\[
U_r   \overset{\text{(\ref{eqn:adam:7}),(\ref{eqn:adam:8})}}{\leq}    C_7 \left( \log 2 \right)^{-1/2}   L r   \sqrt{\log (n + 1)} \big \| \| \x_1 \| \big \|_{\psi_2} \leq  C_7 \left( \log 2 \right)^{-1/2}  L r   \sqrt{\log (n + 1)}.
\]
As a result, in both cases, we have
\begin{equation} \label{eqn:adam:8}
U_r \leq C_8 L r \sqrt{\log (n + 1)},
\end{equation}
for some constant $C_8$.

Substituting (\ref{eqn:adam:5}), (\ref{eqn:adam:6}), and (\ref{eqn:adam:8})  into (\ref{eqn:adam:1}), we have, with a probability $1 - \delta/m$,
\begin{equation} \label{eqn:adam:8}
\|P_n(\w) - P(\w)\|_r \leq  C_9 L r\left[\frac{1}{\sqrt{n}} + \sqrt{\frac{\log (m/\delta)}{n}} \left(1 +  \sqrt{ \log (n + 1) \frac{\log (m/\delta)}{n}} \right) \right],
\end{equation}
for some constant $C_9$. From the condition in (\ref{eqn:n:cond}),  we have
\begin{equation} \label{eqn:adam:9}
\begin{split}
n \geq &  \frac{n+1}{2} + \log \frac{m}{\delta} \cdot \log \left( \frac{2}{e} \log \frac{m}{\delta}\right) \\
 = & \left( \frac{n+1}{2 \log (m/\delta)}  +  \log \left( \frac{2}{e} \log \frac{m}{\delta}\right) \right) \log \frac{m}{\delta}\\
 \geq & \left( 1 + \log \left( \frac{n+1}{2 \log (m/\delta)}\right) +  \log \left( \frac{2}{e} \log \frac{m}{\delta}\right) \right) \log \frac{m}{\delta} \\
 =& \log (n+1) \log \frac{m}{\delta},
\end{split}
\end{equation}
where in the second inequality we use the fact that $1+ \log x \leq x$. Substituting (\ref{eqn:adam:9})   into (\ref{eqn:adam:8}), we have, with a probability $1 - \delta/m$,
\[
\|P_n(\w) - P(\w)\|_r \leq  C Lr \left(\frac{1}{\sqrt{n}} + \sqrt{\frac{\log (m/\delta)}{n}} \right),
\]
for some constant $C$.
\end{document}